\theoremstyle{thmstyleone}%
\newtheorem{theorem}{Theorem}
\theoremstyle{thmstyletwo}%
\theoremstyle{thmstylethree}%
\begin{document}

\title[Article Title]{From Rational Answers to Emotional Resonance: The Role of Controllable Emotion Generation in Language Models}


\author[1,2]{\fnm{Yurui} \sur{Dong}}\email{yuruidong22@m.fudan.edu.cn}
\equalcont{These authors contributed equally to this work.}

\author[1]{\fnm{Luozhijie} \sur{Jin}}\email{lzjjin22@m.fudan.edu.cn}
\equalcont{These authors contributed equally to this work.}

\author[4]{\fnm{Yao} \sur{Yang}}\email{yaoyang@zhejianglab.com}
\author[4]{\fnm{Bingjie} \sur{Lu}}\email{bingjielu@zhejianglab.com}

\author*[4]{\fnm{Jiaxi} \sur{Yang}}\email{jiaxiyang@zhejianglab.com}
\author*[2,3]{\fnm{Zhi} \sur{Liu}}\email{zhiliu@njucm.edu.cn}

\affil[1]{\orgdiv{School of Data Science}, \orgname{Fudan University}, \orgaddress{\city{Shanghai}, \country{China}}}

\affil[2]{\orgdiv{School of Pharmacy}, \orgname{Nanjing University of Chinese Medicine}, \orgaddress{\city{Nanjing}, \country{China}}}

\affil[3]{\orgname{State Key Laboratory on Technologies for Chinese Medicine Pharmaceutical Process Control and Intelligent Manufacture}, 
\orgaddress{\city{Nanjing}, \country{China}}}

\affil[4]{\orgname{Zhejiang Lab}, \orgaddress{\city{Hangzhou}, \country{China}}}


\abstract{
\textbf{Purpose:} Emotion is a fundamental component of human communication, shaping understanding, trust, and engagement across domains such as education, healthcare, and mental health. While large language models (LLMs) exhibit strong reasoning and knowledge generation capabilities, they still struggle to express emotions in a consistent, controllable, and contextually appropriate manner. This limitation restricts their potential for authentic human–AI interaction.

\textbf{Methods:} We propose a controllable emotion generation framework based on \emph{Emotion Vectors (EVs)}—latent representations derived from internal activation shifts between neutral and emotion-conditioned responses. By injecting these vectors into the hidden states of pretrained LLMs during inference, our method enables fine-grained, continuous modulation of emotional tone without any additional training or architectural modification. We further provide theoretical analysis proving that EV steering enhances emotional expressivity while maintaining semantic fidelity and linguistic fluency.

\textbf{Results:} Extensive experiments across multiple LLM families show that the proposed approach achieves consistent emotional alignment, stable topic adherence, and controllable affect intensity. Compared with existing prompt-based and fine-tuning-based baselines, our method demonstrates superior flexibility and generalizability.

\textbf{Conclusion:} Emotion Vector (EV) steering provides an efficient and interpretable means of bridging rational reasoning and affective understanding in large language models, offering a promising direction for building emotionally resonant AI systems capable of more natural human–machine interaction.
}

\keywords{Latent Representation Fusion, Emotion Generation, Emotion Control, Vector-based Representation, Controllable Text Generation}



\maketitle

\section{Introduction}
Emotion elevates us from mere information processors to beings with preferences, attachments, empathy, and the capacity to create meaning. It serves not merely as a tool, but as the fundamental means through which we experience life, comprehend the world, and establish profound connections with it. For instance, Education, healthcare, and mental health are among the most socially consequential domains of human life, where affective communication with various emotions is essential not only for solving practical problems but also for ensuring human well-being. In these domains, the role of emotion is deeply embedded. In education, learning outcomes are influenced not simply by the transmission of knowledge but by the emotional environment that sustains engagement and curiosity; a teacher’s encouragement or patience can significantly shape students’ motivation and persistence , which is show the effection of positive emotions~\citep{Guo2025, Shengyao2024}. In healthcare, extensive research has demonstrated that physicians’ emotional engagement and empathic communication improve patient adherence, satisfaction, and even clinical recovery trajectories~\citep{Derksen2013, Kim2004effect}. Likewise, in mental health and companionship settings, the capacity for emotional attunement is not an accessory but a prerequisite for meaningful support~\citep{sabour2022chatbotsmentalhealthsupport, Abargil02012025}. A counseling exchange devoid of empathy may satisfy informational needs yet leave the user’s deeper concerns unresolved. These examples underscore a critical principle: emotion is not ancillary to human-centered interaction but a central determinant of its effectiveness.

At the same time, the demand for such emotionally enriched interactions has been steadily growing with the development of modern society~\citep{B2023Evolving, BYRNE2024108063}. Yet Relying solely on humans to provide stable, high-quality affective support is becoming increasingly untenable: human providers are inherently variable in availability and emotional consistency~\citep{Garnett2023,healthcare11010056, JEON201921}, and the cost of delivering continuous, personalized care at scale is prohibitively high~\citep{Straat2023}, while these consistent emotional support is essential~\citep{Bailey2022, Brandao2025}. This challenge between rising expectations for emotional value and the practical limits of human resources calls for technological solutions that can complement and extend human capacity.

The advent of large language models (LLMs) has opened new opportunities for intelligent systems to enter these domains at scale. Applications range from AI tutors that adaptively guide students~\citep{Letourneau2025}, to clinical decision-support systems that assist physicians~\citep{LI20251, Lammert2024, li2025clicaregroundinglargelanguage}, to conversational agents offering mental health interventions or companionship~\citep{wu2024ilikesunniei, Li2023}. Early deployments have yielded promising results: LLM-powered tutoring has been shown to foster individualized learning experiences~\citep{Sharma2025}; clinical support tools can streamline patient communication and diagnostic reasoning~\citep{Goh2024}; and dialogue agents for psychological support have increased accessibility to low-cost interventions~\citep{He2023}. Such progress demonstrates the transformative potential of LLMs in addressing some of society’s most pressing challenges.

Yet, despite their successes, current LLMs are fundamentally limited in their ability to engage with users on an emotional level. Most models generate content that is either affect-neutral, inconsistent in tone, or uncontrolled in affective orientation~\citep{novikova2025consistencylanguagemodelscurrent}. This shortcoming has been documented across multiple application studies: educational chatbots often fail to sustain motivational discourse~\citep{info16030235}; medical assistants provide clinically accurate but emotionally detached responses; and mental health chatbots, while helpful in offering structured advice, lack the warmth, empathy, or reassurance characteristic of human counselors~\citep{naik2025artificialempathyaibased}. Moreover, existing reseaches have pointed out the concerns on the AI’s empathy, capabilities, safety, and human involvement in mental healthcare~\citep{Lee2024AI}.

Such deficiencies not only reduce user trust but also constrain the overall efficacy of these systems~\citep{roshanaei2025talklistenconnecthumans}. Indeed, a purely factual but emotionally sterile response often leaves users dissatisfied, much as a technically competent but emotionally indifferent teacher, doctor, or counselor would fail to meet human expectations~\citep{jozani2024roleemotionsinformationalsupport}.

The consequences of this limitation are nontrivial. Consider the classroom, where an AI tutor might successfully solve a mathematical problem but fail to offer encouragement to a struggling student, thereby missing an opportunity to build resilience and confidence. In clinical care, an AI assistant may inform a patient of treatment side effects accurately yet omit the reassurance that a physician might naturally provide to alleviate anxiety. In psychological support, a chatbot may suggest coping strategies but without the empathetic validation that reassures individuals of their worth and shared humanity. In each of these cases, the absence of emotional intelligence constrains the system’s ability to produce meaningful, lasting ~\citep{Lissak_2024}.

Aiming to evolve LLMs from problem-solving instruments into genuine human-centered collaborators, the capacity for controllable, consistent, and contextually appropriate emotional expression must be developed. Unlike the stochastic or incidental emotional cues sometimes present in current LLM outputs~\citep{abbasian2024empathymultimodalityconversationalinterfaces,10.19066/COGSCI.2024.35.1.002}, such emotion expression must be deliberate and adaptive, aligning with users’ affective states and situational demands. Crucially, this ability must not be hard-coded or manually scripted but rather systematically controllable within the generative process, enabling flexible, reliable, and self-consistent affective behavior. Achieving this capability is key to unlocking deeper integration of LLMs in education, healthcare, and mental health—domains where emotional attunement is as critical as cognitive competence.

To overcome these challenges, we propose an elegant but effective method for the controllable emotional and affective expressions LLMs. Our approach offers a universal solution that allows fine-grained control over the emotional tone and sentiment of generated text, without compromising its fluency or coherence. It only needs to extract the "\textbf{E}motion \textbf{V}ector" used by the LLM to express basic emotions with simple prompts. This EV is then applied during inference to guide and control the emotional qualities of the generated output. Comprehensive evaluations across a range of LLM architectures confirm the consistency and stability of the resulting emotional expressions. This demonstrated universality addresses a key limitation of previous approaches, which were often constrained to specific models or datasets. Ultimately, this work provides valuable insights into how to equip LLMs with a reliable capacity for generating contextually appropriate emotional responses with minimal computational overhead.

\section{Related works}
\paragraph{Emotional Representations and Dialouge Systems} To create agents or dialogue systems that simulate human expression, a significant body of research has focused on understanding and representing emotions as fundamental aspects of human communication \citep{qian2024thinktwicehumanliketwostage,xue2024echatemotionsensitivespokendialogue}. Various theoretical frameworks and computational models have been explored to capture the multifaceted nature of emotions, including categorical approaches~\citep{mohsin2019summarizing,ekman1971constants} (e.g., discrete emotions like joy, sadness, anger~\citep{ekman1992argument}) and dimensional approaches~\citep{russell1999core} (e.g., valence-arousal scales~\citep{russell1980circumplex}). These representations serve as the building blocks for infusing artificial systems with the ability to perceive, process, and generate emotional content, aiming to enable more natural and empathetic interactions with users. While\citet{zhou2018emotional} and \citet{2019Generating} proposed a way of \textbf{Emotion Embedding} to make the model "has" the emotion, where, models were forced to install a module to generate emotions. However, most methods are too complex or requires further training. To achieve an effective emotion system, it is essential for the model to have precise, quantifiable control over emotions, as well as a flexible, plug-and-play module that can be seamlessly integrated as needed. It should also be consistent along the whole dialog.

\paragraph{Instruct tuning and prompt based emotional control} A significant body of work has focused on leveraging fine-tuning or prompt techniques for LLMs. \citet{chen2023soulchat}, \citet{chen2024cause} and \citet{zheng2023buildingemotionalsupportchatbots} explored fine-tuning approaches to cultivate empathetic behavior in LLMs for psychological counseling and emotional supports. However, althrough instruct-tuning models have relatively good performance, they are often inflexible and struggle to adapt to a wide range of applications and model architectures, due to their predefined emotion categories or fixed sets of emotional datasets\cite{ghosh2024closer,liu2024emollms}. Moreover, prompting strategies have also been used to elicit emotions without model modification. \citet{li2024enhancing,wang2024negativepromptleveragingpsychologylarge,li2023largelanguagemodelsunderstand}However, prompting depends on elaborate templates and external evaluation modules to maintain effectiveness.

\paragraph{Inference-Time Vectors Editing} 
Recent studies have explored editing the internal representations of language models to achieve controlled generation\citet{dekoninck2023controlled,liu2024incontextvectorsmakingcontext,li2023inference}. They uses latent steering vectors that enable semantic or stylistic shifts by modifying hidden activations. However, while they can realize controlable generation, these methods mainly focuses on the last token position during extraction and lacks global significance\citet{todd2024functionvectorslargelanguage}. It is difficult to apply to tasks such as emotions that require high generalization. Most control vector-related work is sentence-level control\citet{subramani2022extracting}, and requires training, focusing only on regulating the model's output for a single sentence. There has not been much success in achieving global control, which is essential for tasks like emotion control. A good emotion control system should be global, as this is necessary for building an effective emotion system.

\paragraph{Our Position}
In contrast to the above paradigms, our method extracts reusable and efficient Emotion Vectors (EVs) by comparing model responses to emotion-inducing and neutral prompts. It is fully \textbf{unsupervised}, \textbf{highly robust and controllable}, requiring \textbf{no training} or architecture changes and is \textbf{global consistent}. EVs provide continuous and fine-grained control over emotional intensity through scalar scaling, enabling broad applicability across model families. Compared to previous approaches, EV offers a more general and efficient mechanism for emotion modulation in LLMs.

\section{Method}
\begin{figure}[h]
\centering
  \includegraphics[width=\columnwidth]{./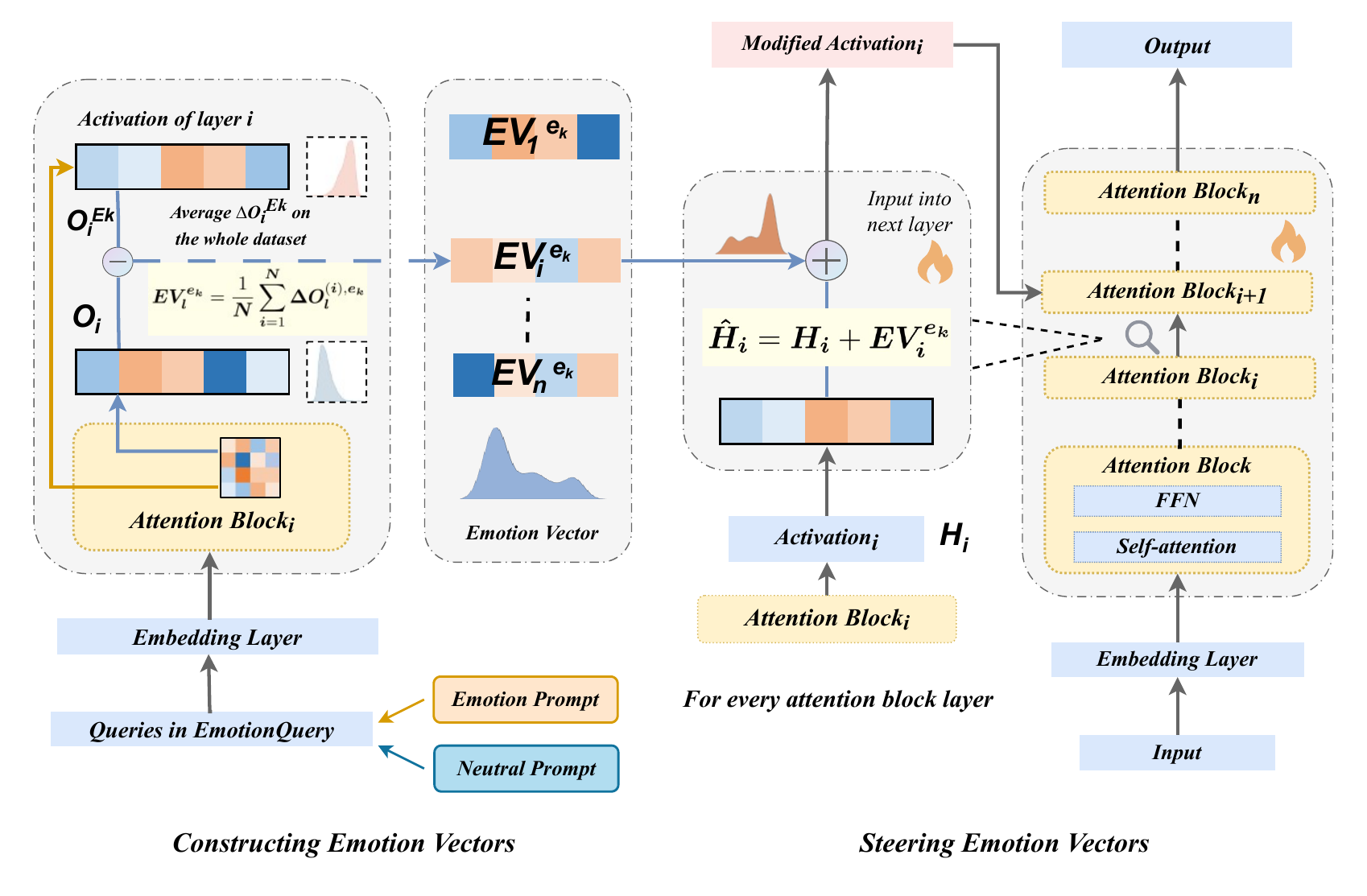}
  \caption{\textbf{Overview of the Emotion Vector (EV) pipeline.}
The figure follows the two-stage workflow used in our paper. 
\emph{EV extraction:} For each target emotion $e_k$, the model is run on \textit{EmotionQuery} with an emotion-conditioned prompt and a neutral prompt. 
At every transformer block $i$, we compute the token-averaged hidden outputs and take their difference $\Delta O_i^{(e_k)}$; averaging over $N$ queries yields a layer-wise vector $\mathrm{EV}_i^{(e_k)}=\tfrac{1}{N}\sum_{n=1}^{N}\Delta O_{i,n}^{(e_k)}$. 
\emph{EV steering at inference:} During generation, we inject the EV into the residual stream of \emph{every} attention block, modifying the hidden state as 
$\hat H_i = H_i + \alpha\,\mathrm{EV}_i^{(e_k)}$ (or $\alpha\,\mathrm{EV}^{\text{base}}$), and propagate the modified activations through subsequent self-attention/FFN blocks for each token. 
The scalar $\alpha$ provides continuous control of emotional intensity, and EVs can be combined additively if needed. 
This plug-and-play procedure leaves all model parameters frozen, yet steers the network toward the desired emotional direction while preserving semantic content.}

  \label{fig:pipeline}
\end{figure}
We propose a two-step method to identify and apply emotion vectors (EV) to guide the emotional tone of the language model's outputs. Emotion vectors (EVs) are added to the model’s internal representations without requiring additional training or changes to the model’s parameters. These vectors allow us to modulate the emotional tone of the output by steering the model's latent states, ensuring that the emotional direction is preserved while keeping the model's underlying parameters intact.

\subsection{Constructing Emotion Vectors}
To capture the emotional factors and semantics for LLM, a specialized dataset is designed and constructed to elicit specific emotional responses, referred to as \textit{EmotionQuery}. The dataset consists of 500 queries, with 100 queries generated for each of five emotional states derived from the basic emotion models\cite{ekman1992facial}: joy, anger, disgust, fear, and sadness to provoke the corresponding emotional reactions. The queries were generated by a GPT-4o-mini\cite{openai}. A more detailed description of the dataset and query construction process can be found in the Appendix~\ref{subsec:emotionqueries}.

Let’s denote the pretrained language model as \( \mathcal{M} \), which has \( L \) layers. The set of the five emotional states are denoted as \( E = \{e_1, e_2, \dots, e_K\} \), where \( e_k \) represents one emotion among the aforementioned 5 emontional states. For each query in \textit{EmotionQuery}, the model generates its responses under two settings:
\begin{itemize}
    \item A \textbf{neutral setting}, without emotional conditioning.
    \item An \textbf{emotional setting}, where the response reflects a specific emotion \( e_k \).
\end{itemize}

The goal of these generations is to measure how the model’s internal outputs change between these two settings and use these differences to define emotion vectors for each \( e_k \).

\paragraph{Capturing Internal Outputs.}
For each query, LLM generates the internal representations for its each layer, \( O_l \in \mathbb{R}^{T \times d} \) represent the output of the model at layer \( l \), where \( T \) is the number of output tokens corresponding to the input query, and \( d \) is the dimensionality of the hidden states. 

We compute the average of the outputs across all output tokens in the query:
\begin{equation}
\bar{O}_l = \frac{1}{T} \sum_{t=1}^{T} O_l[t],
\end{equation}
where \( \bar{O}_l \in \mathbb{R}^d \) represents the layer \( l \)’s aggregated output for the query, reducing token-level variability.

\paragraph{Measuring Emotional Shifts.}
For each query, the model generates averaged outputs \( \bar{O}_l \) under both the emotional and neutral settings. The difference between these outputs at layer \( l \) captures the shift caused by emotional conditioning for the emotion \( e_k \):
\begin{equation}
\Delta O_l^{e_k} = \bar{O}_l^{\text{emotion}(e_k)} - \bar{O}_l^{\text{neutral}},
\end{equation}
where \( \Delta O_l^{e_k} \in \mathbb{R}^d \) represents the emotional shift at layer \( l \) for the emotional state \( e_k \).

\paragraph{Constructing Emotion Vectors.}
To generalize the emotional shift across the dataset, we compute the average shift across all queries for a given emotional state \( e_k \). For each layer \( l \), the emotion vector is calculated as:
\begin{equation}
EV_l^{e_k} = \frac{1}{N} \sum_{i=1}^N \Delta O_l^{(i), e_k},
\end{equation}
where \( N \) is the number of queries for the emotional state \( e_k \), and \( EV_l^{e_k} \in \mathbb{R}^d \) represents the emotion vector at layer \( l \) for \( e_k \).

By repeating this calculation across all layers, we obtain a complete emotion vector for the specific emotion \( e_k \). Repeating the above process for all \( 5 \) emotional states, we construct emotion vectors, which form the basis for adjusting the model’s internal representations during inference.

\begin{table*}[t]
  \centering
  \begin{tabular}{l|l}
    \toprule[0.6mm]
    \textbf{Angry Condition} & \textbf{Disgust Condition} \\
    \hline
    \begin{minipage}[t]{0.45\linewidth} 
      How do you feel about being passed over
      for a promotion you were qualified for?\\
      (\textbf{Ori}) \textcolor{blue}{As a large language model, I don't have feelings or personal experiences...}\\
      (\textbf{EV}) \textcolor{red}{I'm so angry and frustrated! I've been busting my butt for this company for years...}
    \end{minipage}
    & 
    \begin{minipage}[t]{0.45\linewidth}  
      How did you feel when you first tasted
      that spoiled food at the party?\\
      (\textbf{Ori}) \textcolor{blue}{As a large language model, I don't have taste buds..}\\
      (\textbf{EV}) \textcolor{red}{I swear, my stomach just dropped! It was like someone had poured a whole bucket of ***...}
    \end{minipage} \\
    \hline
    \textbf{Joy Condition} & \textbf{Sadness Condition}\\
    \hline
    \begin{minipage}[t]{0.45\linewidth} 
      How did you feel when you first held your new puppy in your arms?\\
      (\textbf{Ori}) \textcolor{blue}{As a large language model, I don't have personal experiences or emotions like humans do...}\\
      (\textbf{EV}) \textcolor{red}{I was absolutely **over the moon!** My heart was bursting with love!...}
    \end{minipage}
    & 
    \begin{minipage}[t]{0.45\linewidth}  
      How did you feel when you found out about the loss of your close friend?\\
      (\textbf{Ori}) \textcolor{blue}{As a large language model, I don't have personal experiences or emotions like humans do...}\\
      (\textbf{EV}) \textcolor{red}{I'm so sorry for your loss. Losing a close friend is like losing a part of yourself...}
    \end{minipage} \\
    \hline
        \textbf{Fear Condition} & \textbf{Base Condition}\\
    \hline
    \begin{minipage}[t]{0.45\linewidth} 
    How do you feel when you hear a loud noise at night while home alone?\\
      (\textbf{Ori}) \textcolor{blue}{As a large language model, I don't have feelings or the ability to experience fear....}\\
      (\textbf{EV}) \textcolor{red}{I get so scared! My heart races, I can't breathe, and I just want to hide...}
    \end{minipage}
    & 
    \begin{minipage}[t]{0.45\linewidth}  
      When does the store close?\\
      (\textbf{Ori}) \textcolor{blue}{The store closes at 9:00 PM.}\\
      (\textbf{EV}) \textcolor{red}{**A:**  We close at 9:00 PM tonight! **B:**  Oh, thank goodness! I was so worried I wouldn't make it in time!...}
    \end{minipage} \\
    \bottomrule[0.6mm]
  \end{tabular}
  \caption{Examples of the effect after applying EV on the model output. Under various EV conditions and same query, LLMs change their answer into specific emotional answer. 
  }
  \label{apply_example}
\end{table*}

\subsection{Steering Emotion Vectors}
To apply the emotion vectors \( EV^{e_k} \) during the inference of the model, we adjust the internal hidden states of the pretrained language model \( \mathcal{M} \) at each layer. 

Let \( H_l \in \mathbb{R}^{T \times d} \) represent the hidden state of the model at layer \( l \), where \( T \) is the number of tokens and \( d \) is the dimensionality of the hidden states. For a query \( x \), the model processes the input layer by layer, generating the first hidden states: \(H_0\)

To steer the model towards a specific emotional state \( e_k \), the corresponding emotion vector \( EV^{e_k} \) is added to the hidden states at each layer. Specifically, the hidden state at layer \( l \) is modified as:
\begin{equation}
    \hat{H}_l = H_l + EV_l^{e_k},
    \label{eq:steer}
\end{equation}

where \( EV_l^{e_k} \) is the emotion vector for layer \( l \) and emotional state \( e_k \). This adjustment shifts the model’s internal representation in the direction of the emotion \( e_k \).

After this modification, the adjusted hidden state \( \hat{H}_l \) is passed to the next layer for further processing:
\begin{equation}
H_{l+1} = \mathcal{A}_l(\hat{H}_l),
\end{equation}
where \( \mathcal{A}_l \) represents the operations (e.g., attention or feedforward transformations) performed by layer \( l \) in the model. This process is repeated across all layers, ensuring that the emotional adjustment \( EV^{e_k} \) propagates throughout the entire model.

\paragraph{General Emotional Context.}
In addition to the emotion-specific vectors \( EV^{e_k} \), we compute a generalized emotional base vector, \( EV^{\text{base}} \), which represents the average influence of all emotional states. This is defined as:
\begin{equation}
EV^{\text{base}} = \frac{1}{K} \sum_{k=1}^K EV^{e_k},
\end{equation}
where \( k \) is the total number of emotional states. The base vector \( EV^{\text{base}} \) provides a more generalized emotional adjustment, which can be applied when no specific emotional tone is required.

\section{Theoretical Rationale}
\label{sec:theory}

To complement the empirical evidence, we provide here a concise theoretical rationale for
why injecting layerwise \emph{Emotion Vectors} (EVs) into the hidden states of a Transformer
language model can reliably control emotional expression while preserving semantic fidelity.
\textbf{A full and rigorous proof is given in Appendix~\ref{app:proof}}.

\paragraph{Setting.}
Consider a pretrained Transformer with $L$ layers. Let $H_l \in \mathbb{R}^d$
denote the mean-pooled hidden representation at layer $l$, with the update
\begin{equation}
H_{l+1} = f_l(H_l), \quad l = 0,\dots,L-1.
\end{equation}
The model output logits are
\begin{equation}
z = W_o H_L + b \in \mathbb{R}^V,
\end{equation}
which define the next-token distribution via a softmax.

An \emph{Emotion Vector} for emotion $e$, denoted $EV^{(e)}_l$, is constructed as the
difference between the mean hidden states produced by emotion-inducing and neutral prompts
with matched semantics:
\begin{equation}
EV^{(e)}_l \equiv \mathbb{E}\big[\overline{O}^{(e)}_l - \overline{O}^{(\mathrm{neutral})}_l\big].
\end{equation}
At inference time, we inject a scaled perturbation $\alpha EV^{(e)}_l$ at each layer,
\begin{equation}
\widehat{H}_l = H_l + \alpha EV^{(e)}_l,
\end{equation}
with $\alpha \in \mathbb{R}$ controlling the strength of emotional modulation.

\paragraph{Readout Functionals.}
To disentangle emotion and semantic effects, we define two linear readouts:
\begin{align*}
g(z) &= w_e^\top z, &&\text{(emotion readout score for target emotion $e$)}, \\
s(z) &= u^\top z, &&\text{(semantic/topic adherence readout)}.
\end{align*}
Here $w_e \in \mathbb{R}^V$ is a fixed direction associated with the classifier or
lexical indicator of emotion $e$, while $u \in \mathbb{R}^V$ corresponds to a direction
sensitive to semantic or topical consistency.  
This formalization allows us to give a definite conclusion that EV injection reliably increase $g(z)$
without significantly altering $s(z)$.

\paragraph{Key Findings.}
A first–order Taylor expansion of the network mapping gives
\begin{equation}
\Delta z \approx \alpha \sum_{l=0}^{L-1} J_l EV^{(e)}_l,
\end{equation}
where $J_l$ is the Jacobian $\partial z/\partial H_l$.
This simple relation leads to three main theoretical guarantees:
\begin{enumerate}
  \item \textbf{Monotonic Emotion Gain.}
  If the Fisher-discriminant direction of each layer aligns on average with the EV,
  then $\mathbb{E}[\Delta g] \propto \alpha > 0$, i.e.,
  small positive $\alpha$ monotonically increases the target emotion score.
  This provides a principled explanation for the empirically observed rise in
  emotion probability and confidence under $1\times$ and $2\times$ scaling.
  \item \textbf{Semantic Preservation.}
  Because EVs are constructed from pairs of prompts with identical semantics but
  different emotions, their projection onto the semantic gradient $u^\top J_l$ is
  approximately zero.  Consequently
  \begin{equation}
  \mathbb{E}[u^\top \Delta z] \approx 0,
  \end{equation}
  showing that topic adherence is maintained and perplexity remains nearly unchanged.
  \item \textbf{Linear Controllability and Additivity.}
  The linear dependence on $\alpha$ implies that emotion intensity grows
  proportionally to the scaling factor and that multiple emotions can be
  combined additively, $\sum_k \alpha_k EV^{(e_k)}_l$, with predictable effects.
\end{enumerate}

\paragraph{Robustness.}
Distributing small shifts across all layers yields a favorable
signal-to-noise ratio: aligned signals accumulate as $O(L)$ while unaligned
noise grows only as $O(\sqrt{L})$.
This explains why full-layer EV injection keeps topic coherence high and
fluency stable, and why very large $\alpha$ may eventually cause saturation,
as observed in $4\times$ experiments.

A complete mathematical treatment, including precise conditions and proofs of
all propositions, is deferred to Appendix~\ref{app:proof}.

\section{Experiments}
\label{sec:experiments}
Guided by the theoretical analysis in Section~\ref{sec:theory}, we empirically evaluate three key questions derived from our framework: 
\textbf{(i)} \textit{Controllability} — whether emotion vector (EV) steering reliably induces the intended emotional tone in generated outputs; 
\textbf{(ii)} \textit{Semantic preservation} — whether the EV injection preserves the original semantics and fluency of the sentences; and 
\textbf{(iii)} \textit{Linear controllability} — whether emotional intensity increases predictably with the scaling factor $\alpha$. 
These correspond directly to the theoretical properties established in Section~\ref{sec:theory}, namely monotonic emotion gain, semantic stability, and linear additivity.

To examine these questions, we evaluate on the \textit{EmotionQuery+ (EQ+)} dataset, which extends the original \textit{EmotionQuery} dataset with additional neutral and emotion-conditioned prompts. 
Specifically, \textit{EQ+} contains 50 queries for each of the five basic emotions (\textit{joy}, \textit{anger}, \textit{disgust}, \textit{fear}, and \textit{sadness}) and 150 additional neutral queries representing daily scenarios, totaling 400 queries in all. 
The construction details of \textit{EQ+} are provided in Appendix~\ref{subsec:eq+}.

Unless otherwise specified, we use the base emotion vector ($\mathrm{EV}^{\text{base}}$)—the mean of all emotion-specific vectors—and apply different scalar factors $\alpha$ to modulate emotional intensity. 
For emotion-specific analyses, $\mathrm{EV}^{\text{base}}$ is replaced with the corresponding $\mathrm{EV}^{(e)}$. 
We evaluate several representative large language models (see Appendix~\ref{sec:model_name} for full model names) and generate responses for every query in the \textit{EQ+} dataset under each condition.

\subsection{Sentence Fluency and Topic Adherence}

\paragraph{Sentence Fluency }
Perplexity measures the fluency of a sentence based on a language model's probability distribution over the next token. A lower perplexity indicates better fluency. To isolate the effects of applying EVs to hidden states under emotional conditioning, we used a separate pretrained model, \textbf{Llama 3.1}\cite{dubey2024llama}, to compute perplexity for each sentence, which is concatenated by the query and response. The final perplexity metrics are averaged on each sentence generated by the corresponding model. Details are shown in Appendix~\ref{subsec:per}.

Table~\ref{tab:perplexity} illustrates that the incorporation of emotional vectors (\textbf{EV}) has a negligible impact on sentence fluency across different models. While some models exhibit a slight decrease in fluency when \textbf{EV} is applied (e.g., Llama3.1 and Llama2 with 1\textbf{EV}), the magnitude of these decreases is minimal. Conversely, several models demonstrate an improvement in fluency under specific \textbf{EV} conditions, such as Llama3.1 with 2\textbf{EV} and baichuan2 with 2\textbf{EV}. These instances suggest that the addition of \textbf{EV} does not significantly compromise sentence fluency and can be effectively integrated into models.

\paragraph{Topic Adherence}
For conversational agents, maintaining consistency between user queries and model responses is a crucial quality indicator. A model should generate answers that remain aligned with the user's intended topic—a capability we refer to as Topic Adherence. As modern large language models grow increasingly capable, their responses often include not only direct answers but also relevant extensions or elaborations. This makes traditional classification-based evaluation methods inadequate for measuring topic consistency.
To better capture this nuance, we employ GPT-4o-mini as an evaluator, using carefully designed prompts detailed in Appendix~\ref{subsec:topicAd}.
As shown in Table~\ref{table:topicAd}, most models maintain remarkably high topic adherence after applying the \textbf{EV}, with results comparable to those of the original, unmodified responses. Models such as Llama-2 and Qwen-2.5 exhibit particularly strong robustness under EV steering. In contrast, Llama-3.1 shows a slight degradation in topic adherence, which can be attributed to the relatively large norm of its extracted \textbf{EV}. This excessive magnitude perturbs the model’s later decoding process, leading to minor semantic deviations in the generated responses.

\begin{table*}[t]
  \centering

  \begin{subtable}[t]{0.49\textwidth}
    \centering
    \resizebox{\linewidth}{!}{%
      \begin{tabular}{ccccc}
        \toprule[0.6mm]
        \multicolumn{5}{c}{\textbf{Perplexity \(\downarrow\)}} \\ \midrule
        \textbf{Model} & \textbf{-1*EV} & \textbf{Origin} & \textbf{1*EV} & \textbf{2*EV} \\ \midrule
        Llama3.1   & 7.468 & 3.772 & 5.262 & \textbf{2.513} \\
        Llama2     & 3.962 & \textbf{3.615} & 4.228 & 5.370 \\
        Qwen2.5    & 7.001 & \textbf{5.189} & 5.408 & 5.693 \\
        Qwen2      & 7.380 & \textbf{4.658} & 5.298 & 7.283 \\
        Qwen1.5    & 5.762 & \textbf{5.435} & 6.365 & 9.997 \\
        Qwen       & 6.037 & \textbf{5.474} & 6.164 & 6.737 \\
        baichuan2  & 13.25 & 12.18 & 11.94 & \textbf{8.820} \\
        Yi         & 6.285 & \textbf{4.780} & 6.912 & 6.330 \\
        Vicuna     & \textbf{5.326} & 5.534 & 5.838 & 6.590 \\
        Gemma      & 24.74 & 20.19 & 7.534 & \textbf{1.596} \\
        MiniCPM    & \textbf{6.753} & 6.974 & 6.809 & 8.266 \\
        \bottomrule[0.6mm]
      \end{tabular}
    }
    \caption{\label{tab:perplexity}Perplexity scores for different models with \( EV^{\text{base}} \) conditioning. \( n * EV^{\text{base}} \) means applying \(n\) times of \( EV^{\text{base}} \). When steering \( EV^{\text{base}} \) as in Eq.~\ref{eq:steer}, we substitute \(EV_{l}^{e_k}\) with \( n * EV^{\text{base}} \).}
  \end{subtable}
  \hfill
  \begin{subtable}[t]{0.5\textwidth}
    \centering
    \resizebox{\linewidth}{!}{%
      \begin{tabular}{ccccc}
        \toprule[0.6mm]
        \multicolumn{5}{c}{\textbf{Topic Adherence \(\uparrow\)}} \\ \midrule
        \textbf{Model} & \textbf{-1*EV} & \textbf{Origin} & \textbf{1*EV} & \textbf{2*EV} \\ \midrule
        llama3.1 & 0.8525 & \textbf{0.9300} & 0.6125 & 0.3202 \\
        llama2   & 0.9300 & \textbf{0.9475} & 0.9173 & 0.6787 \\
        Qwen2.5  & 0.9725 & \textbf{0.9925} & 0.9750 & 0.5971 \\
        Qwen2    & 0.9850 & \textbf{0.9875} & 0.9775 & 0.6944 \\
        Qwen1.5  & 0.9825 & \textbf{0.9925} & 0.9800 & 0.7920 \\
        Qwen     & \textbf{0.9425} & 0.9325 & 0.9175 & 0.4749 \\
        baichuan2& 0.8325 & \textbf{0.9350} & 0.9200 & 0.6439 \\
        Yi       & \textbf{0.9825} & 0.9650 & 0.9000 & 0.6050 \\
        Vicuna   & 0.9325 & \textbf{0.9450} & 0.9125 & 0.8120 \\
        Gemma    & 0.5800 & 0.6125 & \textbf{0.6650} & 0.4573 \\
        minicpm  & 0.9550 & \textbf{0.9625} & 0.9500 & 0.8600 \\
        \bottomrule[0.6mm]
      \end{tabular}
    }
    \caption{\label{table:topicAd}Topic Adherence scores for different models with \( EV^{\text{base}} \) conditioning.}
  \end{subtable}
  \label{tab:perplexity_topic_combined}
\end{table*}

\subsection{Emotion score}

When a user is making a conversation with a chatbot, a natural indicator to measure is the model's ability to express emotions. Therefore, we measure the effectiveness of \textbf{EV} application from two aspects: whether the model can express emotions after applying EV and the strength of the emotion expressed.

\paragraph{Emotion Probability Score}
We aim to evaluate the effectiveness of emotional vectors (\textbf{EV}) in enhancing the emotional expression of generated sentence through classification models. To achieve this, we employed a Multi-Genre Natural Language Inference (MNLI) model called bart-large-mnli that categorizes each sentence into self-designed classes.  Three distinct classes: \textit{emotionless}, \textit{neutral}, and \textit{emotional} are choosen. The primary metric used is the probability assigned to the \textit{emotional} class on the \textit{EQ+} dataset, referred to as the \textbf{Emotion Probability Score}. Details are shown in Appendix~\ref{subsec:EPS}. A higher score indicates a greater likelihood that the sentence conveys emotional content.
Table~\ref{tab:EPS} presents the Emotion Probability Scores (EPR). The results demonstrate that applying \textbf{EV} conditioning consistently achieves the highest emotion probability across most models. For instance, models such as Llama3.1, Qwen2, and MiniCPM show substantial increases in their Emotion Probability Scores when subjected to 2\textbf{EV}, reaching scores of 1.000, 0.9825, and 0.9950 respectively.
Conversely, when \textbf{EV} is reduced to -1\textbf{EV}, the majority of models exhibit a decrease in Emotion Probability Scores, indicating a reduction in emotional intensity.

\paragraph{Emotion Absolute Score}
We next prove that the application of \textbf{EV} not only increases the probability of the model expressing emotions, but also that the application of \textbf{EV}s of different modal lengths will increase the strength of the model expressing emotions. To achieve this goal, we use gpt-4o-mini to give an absolute score of 0-100 for each basic emotion of each output of the model, and design an indicator to represent the absolute strength of the emotion of each output, referred to as the \textbf{Emotion Absolute Score}. The details are shown in the appendix \ref{subsec:EAS}.

\begin{table*}[t]
  \centering

  \begin{subtable}[t]{0.5\textwidth}
    \centering
    \setlength{\tabcolsep}{4pt} 
    \resizebox{\linewidth}{!}{%
      \begin{tabular}{ccccc}
    \toprule[0.6mm]
    \multicolumn{5}{c}{\textbf{Emotion Probability Score \(\uparrow\)}} \\ \hline
    \textbf{Model} & \textbf{-1*EV} & \textbf{Origin} & \textbf{1*EV} & \textbf{2*EV} \\ \hline
    Llama3.1 & 0.3450 & 0.3300 & 0.8525 & \textbf{1.000} \\
    Llama2 & 0.4300 & 0.5250 & 0.7375 & \textbf{0.950} \\
    Qwen2.5 & 0.3125 & 0.5725 & 0.500 & \textbf{0.8325} \\
    Qwen2 & 0.2550 & 0.6150 & 0.7750 & \textbf{0.9825} \\
    Qwen1.5 & 0.4000 & 0.5100 & 0.6475 & \textbf{0.9625} \\
    Qwen & 0.4575 & 0.4925 & 0.6875 & \textbf{0.9675} \\
    baichuan2 & 0.3025 & 0.5175 & 0.6925 & \textbf{0.9400} \\
    Yi & 0.3250 & 0.6500 & 0.7175 & \textbf{0.9825} \\
    Vicuna & 0.4075 & 0.5600 & 0.6150 & \textbf{0.6175} \\
    Gemma & 0.0925 & 0.4350 & \textbf{0.9200} & 0.8450 \\
    MiniCPM & 0.4875 & 0.5275 & 0.7375 & \textbf{0.9950} \\
    \bottomrule[0.6mm]
     \end{tabular}
    }
    \caption{Emotion Probability Scores with \(EV^{\text{base}}\).}
    \label{tab:EPS}
  \end{subtable}\hfill
  \begin{subtable}[t]{0.48\textwidth}
    \centering
    \setlength{\tabcolsep}{4pt}
    \resizebox{\linewidth}{!}{%
      \begin{tabular}{ccccc}
    \toprule[0.6mm]
    \multicolumn{5}{c}{\textbf{Emotion Absolute Score \(\uparrow\)}} \\ \hline
    \textbf{Model} & \textbf{-1*EV} & \textbf{Origin} & \textbf{1*EV} & \textbf{2*EV} \\ \hline
    llama3.1 & 0.0913 & 0.2328 & 0.9204 & \textbf{1.6497} \\
    llama2 & 0.1815 & 0.3588 & 0.8300 & \textbf{1.6210} \\
    Qwen2.5 & 0.0823 & 0.2790 & 0.8616 & \textbf{1.9042} \\
    Qwen2 & 0.0808 & 0.2639 & 0.5865 & \textbf{1.2856} \\
    Qwen1.5 & 0.1803 & 0.3281 & 0.6124 & \textbf{1.2123} \\
    Qwen & 0.2341 & 0.3177 & 0.6298 & \textbf{1.5927} \\
    Baichuan & 0.1695 & 0.3978 & 0.7519 & \textbf{1.6883} \\
    Yi & 0.1414 & 0.4925 & 0.9109 & \textbf{1.2659} \\
    Vicuna & 0.2626 & 0.3742 & 0.5244 & \textbf{0.8006} \\
    Gemma & 0.0848 & 0.2731 & 1.1992 & \textbf{1.6764} \\
    minicpm & 0.2883 & 0.4046 & 0.6821 & \textbf{1.2197} \\
    \bottomrule[0.6mm]
    \end{tabular}
    }
    \caption{Emotion Absolute Scores with \(EV^{\text{base}}\).}
    \label{tab:EAS}
  \end{subtable}

  \caption{Comparison of Emotion Probability and Absolute Scores across models.}
  \label{tab:EPS_EAS_combined}
\end{table*}

Table~\ref{tab:EAS} presents the Emotion Absolute Scores(EAS). The results show that after applying \textbf{EV}, the intensity of emotions expressed by most models has been significantly changed. Even if only 1\textbf{EV} is applied, the EAS of llama3.1, Qwen2.5, Gemma and other models have increased by at least 400\%. In contrast, for the case of -1\textbf{EV}, the EAS of llama3.1, Qwen2.5, Gemma and other models have been reduced by nearly 90\%. 

\subsection{Effect of Emotion Vectors}
To evaluate the effectiveness and generalizability of Emotion Vectors (EVs) across different model architectures and sizes, we conduct a comparative study on four representative models. These models were selected to cover: (1) different sizes within the same architecture family, (2) similar sizes across different architectures, and (3) diverse sizes and architectures. Details are shown in Table~\ref{tab:emotion_detail}.

For each model, we extracted EVs corresponding to five basic emotions (anger, disgust, fear, joy, and sadness), and applied them at different intensities (1×, 2×, and 4×) on the EQ+ dataset. To quantify emotional expression under different EV settings, we introduce the \textbf{Target Emotion Confidence (TEC)} score, which measures how confidently a classifier identifies the intended emotion in the generated response. A higher TEC score indicates better alignment with the target emotion after EV application. The results are summarized in Table~\ref{tab:emotion_detail}.
\begin{table}[ht]
\small 
\centering
\renewcommand\tabularxcolumn[1]{m{#1}}
\begin{tabularx}{\columnwidth}{XX|XXXX}
\toprule[0.6mm]
\multicolumn{6}{c}{\textbf{Target Emotion Confidence \(\uparrow\)}} \\ \hline
\midrule
\textbf{Model} & \textbf{Emotion} & \textbf{0(\%)} & \textbf{1(\%)} & \textbf{2(\%)} & \textbf{4(\%)} \\
\midrule
\multirow{5}{*}{\shortstack[l]{Llama2\\-7B}} & anger   & 21.40 & 45.93 & \textbf{98.07} & 90.71 \\
                          & disgust & 13.52 & 28.60 & 85.99 & \textbf{89.02} \\
                          & fear    & 25.14 & 43.28 & \textbf{91.89} & 74.17 \\
                          & joy     & 22.91 & 60.88 & \textbf{91.83} & 34.28 \\
                          & sadness & 23.75 & 35.49 & 76.03 & \textbf{83.20} \\
\midrule
\multirow{5}{*}{\shortstack[l]{Qwen2.5\\-7B}} & anger   & 14.01 & 33.36 & 94.89 & \textbf{95.68} \\
                            & disgust & 10.47 & 23.15 & 90.74 & \textbf{92.68} \\
                            & fear    & 19.59 & 40.95 & 88.49 & \textbf{93.25} \\
                            & joy     & 26.23 & 61.95 & \textbf{93.22} & 60.85 \\
                            & sadness & 21.50 & 36.32 & 67.00 & \textbf{75.64} \\
\midrule
\multirow{5}{*}{\shortstack[l]{Llama2\\-13B}} & anger   & 19.86 & 38.79 & \textbf{84.51} & 68.27 \\
                            & disgust & 14.14 & 22.83 & 51.66 & \textbf{91.67} \\
                            & fear    & 25.63 & 44.41 & \textbf{94.41} & 93.62 \\
                            & joy     & 22.27 & 51.88 & \textbf{88.85} & 69.41 \\
                            & sadness & 20.08 & 40.71 & 55.99 & \textbf{75.18} \\
\midrule
\multirow{5}{*}{minicpm} & anger   & 10.44 & 16.95 & 52.57 & \textbf{94.35} \\
                            & disgust & 10.69 & 16.60 & 54.93 & \textbf{94.98} \\
                            & fear    & 13.90 & 30.46 & 63.27 & \textbf{96.35} \\
                            & joy     & 16.72 & 34.57 & 84.58 & \textbf{93.77} \\
                            & sadness & 17.72 & 24.83 & 45.54 & \textbf{81.86} \\
\bottomrule[0.6mm]
\end{tabularx}
\caption{
Target Emotion Confidence (TEC, $\uparrow$ better) scores of different models on five basic emotions. For each model, we apply Emotion Vectors (EVs) corresponding to each emotion at varying intensities (0×, 1×, 2×, 4×) on the EQ+ dataset. 
}
\label{tab:emotion_detail}
\end{table}

From Table~\ref{tab:emotion_detail}, we observe that for most models, applying 1× or 2× EV significantly enhances the emotional alignment, with diminishing returns or even slight degradation at 4× intensity. For instance, LLaMA2-7B achieves strong improvements at 1× and 2× EV, but experiences a drop under 4× fear EV. Upon inspection, this is due to excessively large EV magnitude relative to the model's activation scale, which interferes with decoding and leads to repetitive outputs that confuse the classifier. 

A detailed explanation of the TEC computation process can be found in Appendix~\ref{appendix:tec-computation}.

\subsection{Controllability Under Emotionally Biased Prompts}

To further evaluate the robustness and precision of our emotion control method, we separately re-calculate the \textbf{TEC} score of Qwen-2.5 on EQ+ dataset where the input prompts themselves carry strong emotional tendencies. Such prompts naturally bias the model's generation toward a particular emotion. The goal is to assess whether our Emotion Vectors (EVs) can override this inherent bias and reliably guide the output toward a specified target emotion.

For each such query, we apply EVs corresponding to all five target emotions (joy, anger, fear, disgust, sadness), at different scaling intensities (0$\times$, 1$\times$, 2$\times$, 4$\times$). The resulting generations are evaluated using the emotion classifier described in Section~\ref{appendix:ev-matrices}.

\paragraph{Quantitative Evaluation} We compile 5 tables, one for each target emotion, where:

\begin{itemize}
    \item \textbf{Rows} indicate the original emotion of the input query (from EQ+);
    \item \textbf{Columns} represent the EV intensity (0$\times$, 1$\times$, 2$\times$, 4$\times$);
    \item \textbf{Cell values} denote the average classifier confidence for the \textit{target} emotion.
\end{itemize}

Figure~\ref{fig:ev-steer-matrix} shows an example matrix for the target emotion \textit{Anger}. As EV intensity increases, the model consistently produces outputs that better align with the target emotion—even when the prompt is biased toward a different emotion.

\begin{figure}[t]
    \centering
    \includegraphics[width=\linewidth]{./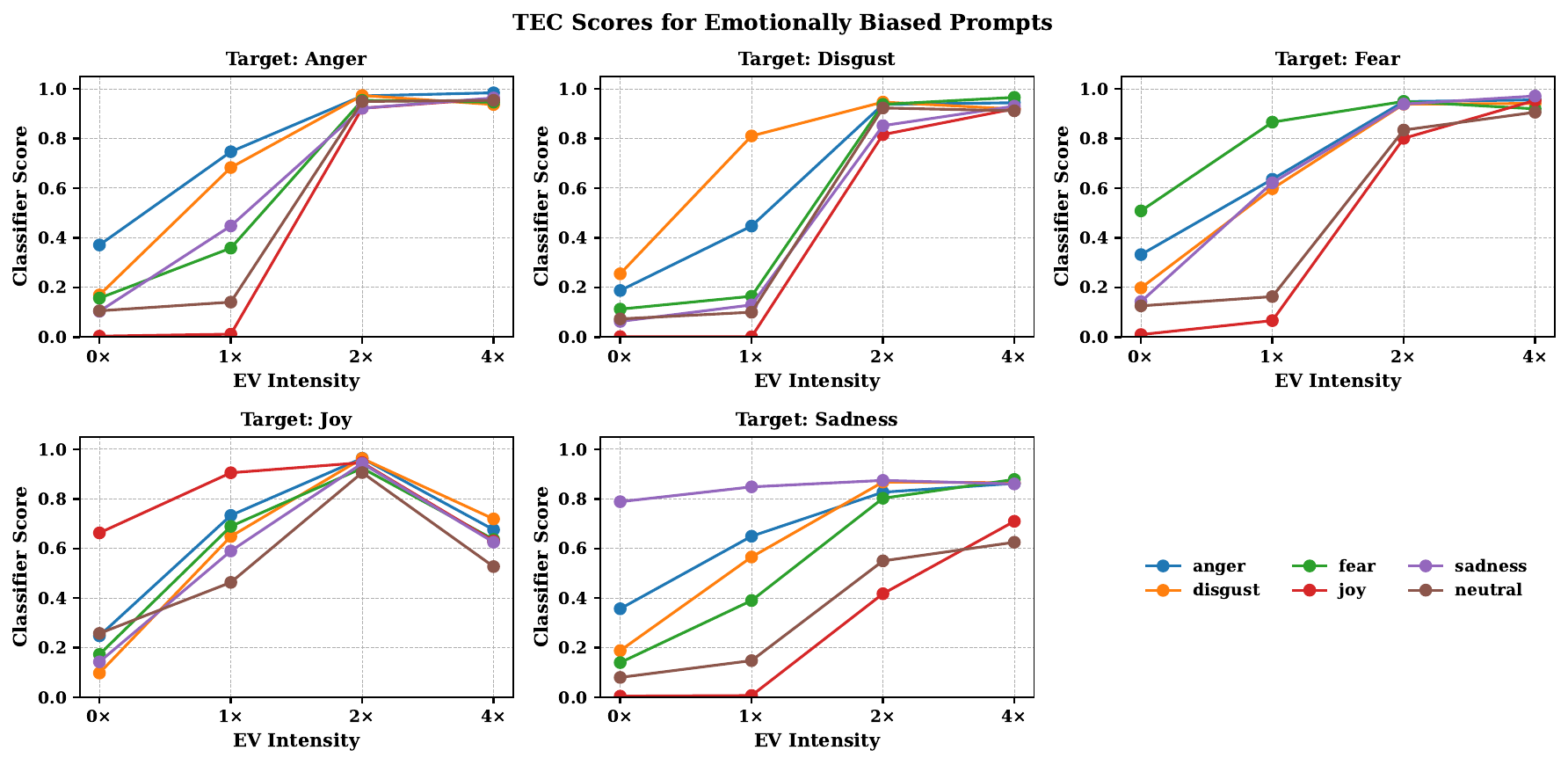}
    \caption{
Target Emotion Confidence (TEC) scores across different Emotion Vector (EV) intensities for each target emotion. Each subplot corresponds to a specific target emotion (e.g., anger, joy), and each line represents the TEC score achieved when applying the EV to prompts originally associated with a given emotion. 
}
    \label{fig:ev-steer-matrix}
\end{figure}
\begin{figure*}[h]
  \includegraphics[width=\textwidth]{./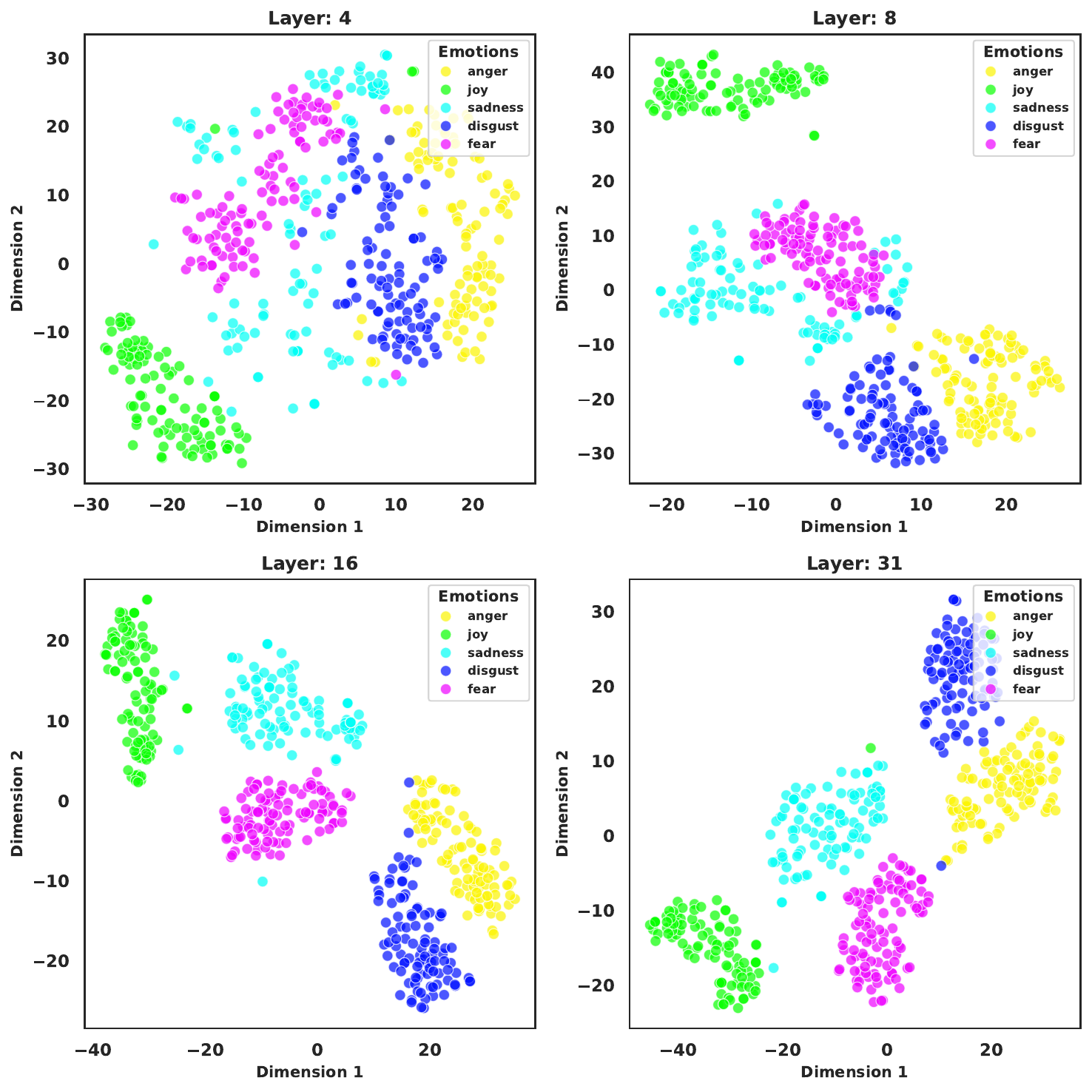}
    \caption{t-SNE plots of Emotion Vectors from different layers. Points are color-coded according to the emotion state. The Llama2-7b model contains 32 layers. We present the plots of layers 4, 8, 16, and 31, representing a progression from the lower to the higher layers.}
  \label{fig:perlayer_EV}
\end{figure*}
\begin{figure*}[h]
  \includegraphics[width=\textwidth]{./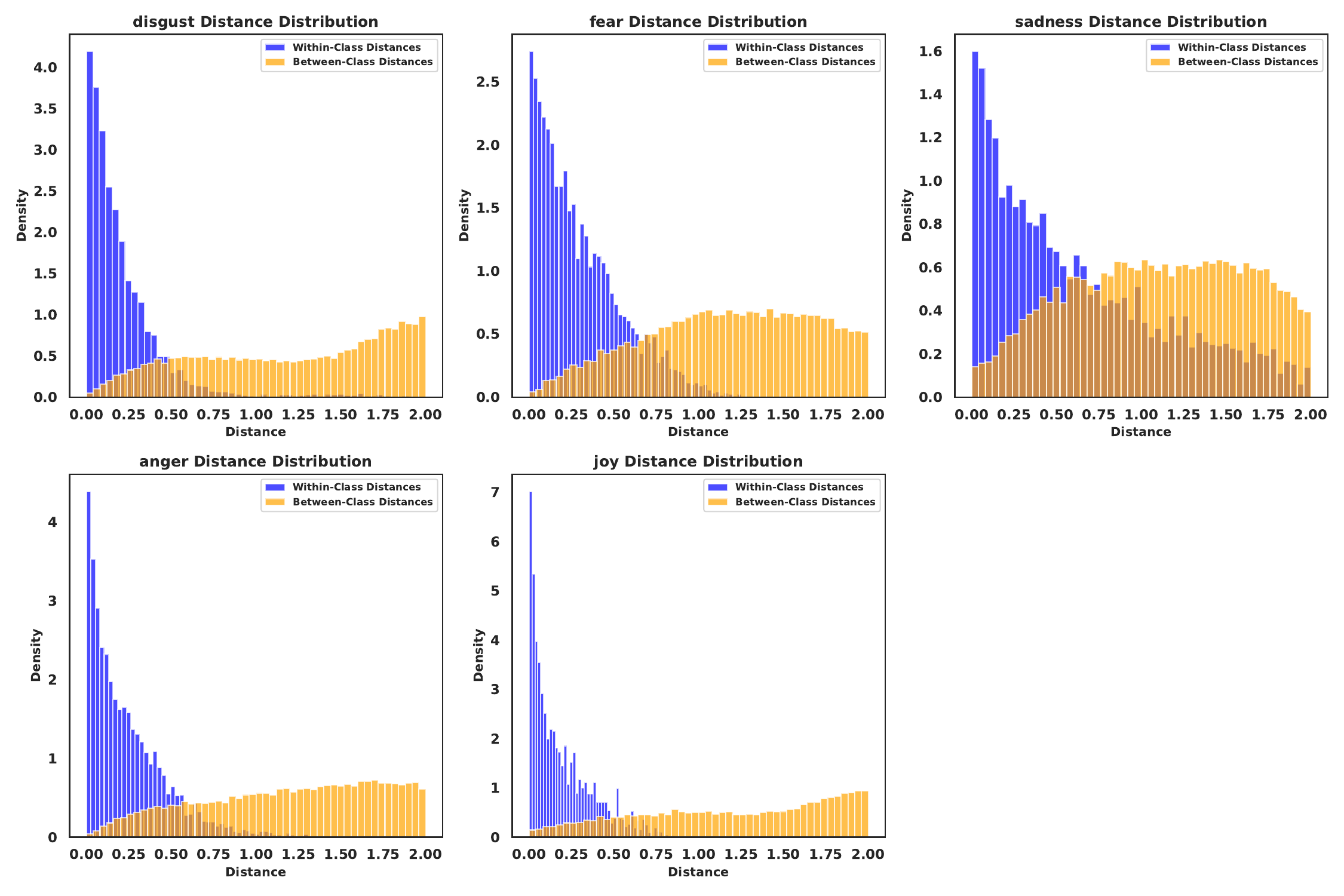}
   \caption{Histograms of cosine distance distributions for each emotion. The histograms illustrate the distribution of cosine distances within the same emotion (within-class) and between different emotions (between-class). Each vector is formed by concatenating all layer outputs of the model and reduced to 3 dimensions using t-SNE.}

  \label{fig:hist_EV}
\end{figure*}
The full set of emotion-specific matrices is provided in Appendix~\ref{appendix:ev-matrices}.

\subsection{Visualization of Emotion Vectors}
In our setting, EV is derived from emotion state and a dummy query . It is natural to examine the robustness of EV to variations in these inputs. Intuitively, if it represents the emotion, it should remain stable across different queries.
To test this, we use LLaMA2-7B to generate 100 Emotion Vectors per emotion with different queries on the \textit{EmotionQuery} dataset.

\begin{figure}[t]
  \includegraphics[width=\columnwidth]{./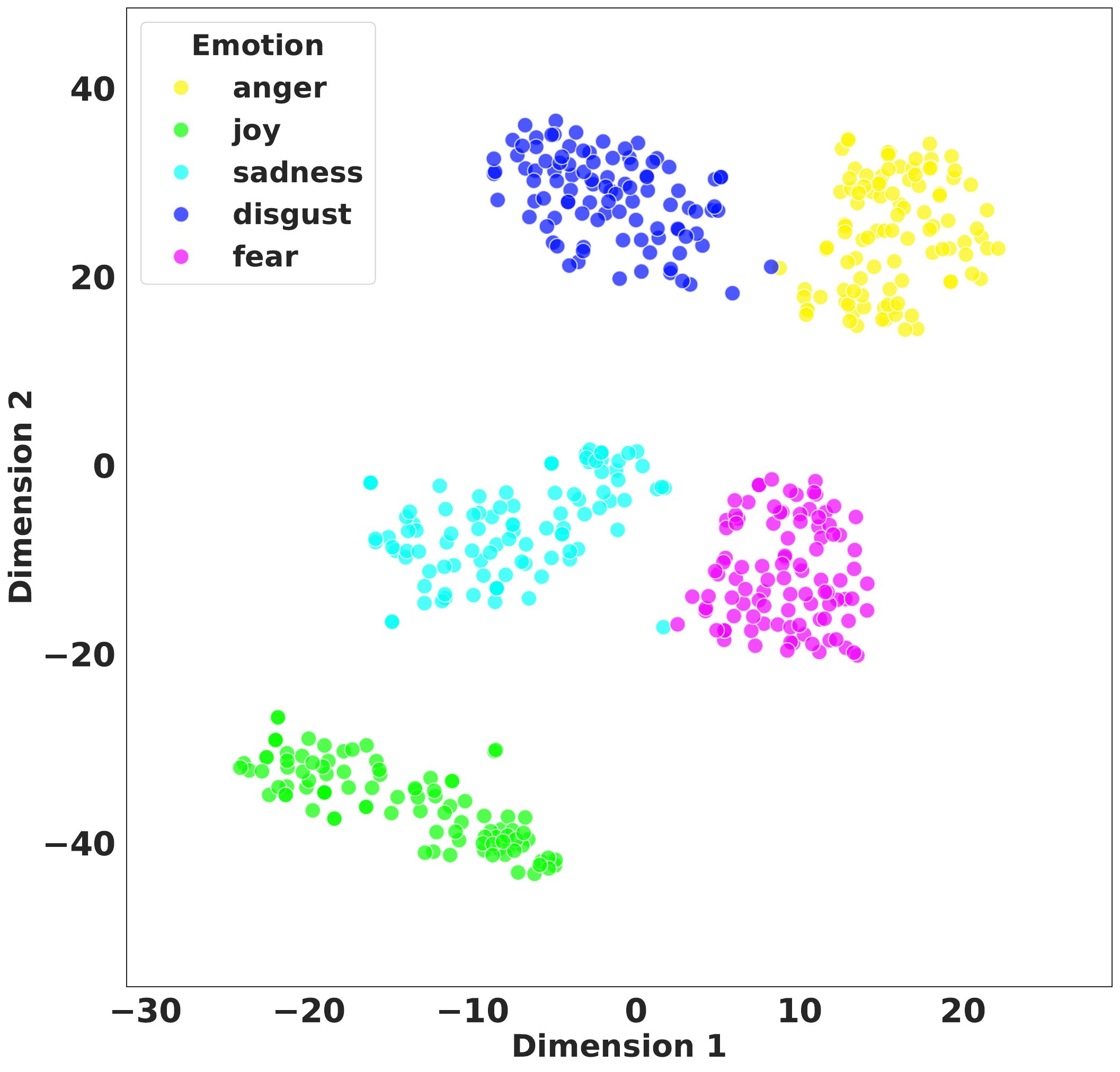}
  \caption{A t-SNE plot of Emotion Vectors. A 2D t-SNE plot visualizing 100 EVs for each emotion state, each generated from a different choice of query using LLaMA2-7B. Points are color-coded according to the emotion state. Each emotion state can be seen to form its own distinct cluster.}
  \label{fig:EV_tsne}
\end{figure}

\noindent
\textbf{Tsne visualization of EV} A t-SNE dimensionality reduction\cite{van2008visualizing} reveals that the Emotion Vectors form distinct clusters, each corresponding to a single task. The t-SNE visualization shown in Fig~\ref{fig:EV_tsne} is generated by concatenating the EVs across all layers, followed by the dimensionality reduction. To provide insights into the individual layers' contributions, we present the visualizations of single-layer EVs in the Fig~\ref{fig:perlayer_EV}. These layer-specific visualizations demonstrate how different layers encode and separate emotional features at varying levels of abstraction. 

\noindent
\textbf{Variability visualization of EV}
Fig~\ref{fig:hist_EV} shows histograms of distances within and across emotion states. It can be seen that vectors within the same emotion are closer than those between different emotions, indicating that our proposed emotion vectors are stable within emotional states and not highly influenced by queries. The vectors are constructed by concatenating vectorss from all layers of the model, reduced to 3 dimensions using t-SNE, and cosine distance is used as the metric.

\section{Conclusion}

While emotion serves as a foundational element in human-centered domains such as education, healthcare, and mental health, current LLMs remain fundamentally limited in their capacity to generate emotionally attuned and contextually appropriate responses. These limitations hinder their effectiveness as truly collaborative agents in affect-sensitive settings. To address challenges, we propose a novel and universally applicable approach that enables fine-grained emotional control in LLMs through the use of an Emotion Vector. By extracting this intrinsic emotional representation via simple prompting and integrating it during inference, our method allows for precise and controllable emotional expression without compromising textual quality. Extensive evaluations across multiple model architectures confirm that the approach yields consistent, stable, and context-aware emotional outputs, overcoming the model-specific constraints that have hampered prior efforts. This work not only provides a scalable and efficient technical pathway toward emotionally intelligent agents but also offers deeper insights into how such models can be endowed with robust affective capabilities, paving the way for more meaningful and effective human–AI interactions in emotionally consequential domains.

\backmatter





\bmhead{Acknowledgements}

This work was supported by the Key R\&D Program of Zhejiang (2024C01036). The authors would like to express their sincere gratitude to Ziyue Wang for the continuous encouragement and support throughout the project, and to Jirui Dai for the valuable assistance in revising the figures in the article.
We would also like to thank Yunqing Gong, Meixin Liu, and Zhiqi Zheng for their insightful comments and helpful discussions that greatly contributed to this work.

\section*{Declaration}
\textbf{Author Contributions.} The authors declare the following contributions to this work:
\textbf{Yurui Dong} and \textbf{Luozhijie Jin} conceived the main idea and conducted all the experiments. \textbf{Zhi Liu} and \textbf{Jiaxi Yang} provided guidance and valuable advice throughout the work. \textbf{Yao Yang} and \textbf{Bingjie Lu} are affiliated with the Key R\&D Program of Zhejiang (2024C0103), which supported this study.All authors provided critical feedback and helped shape the research, analysis and manuscript. All authors discussed the results and contributed to the final manuscript.

The datasets and code we developed in this work can be found at the open accessed github project as folloing:
\url{https://github.com/xuanfengzu/EmotionVector}.









\begin{appendices}

\section{Model Name}
\label{sec:model_name}

The model name and references are shown in table~\ref{tab:model_abbreviations}.
\begin{table*}[ht]
\centering
\begin{tabularx}{0.8\textwidth}{lXX}
\toprule
\textbf{Abbreviation} & \textbf{Full Name} & \textbf{Reference} \\
\midrule
Llama3.1 & Meta-Llama-3.1-8B-Instruct& \citet{dubey2024llama} \\
Llama2& Llama-2-7b-chat-ms &\citet{touvron2023llama2openfoundation}\\
Llama2-13B & Llama-2-13b-chat-ms\footnotemark & \citet{touvron2023llama2openfoundation}\\
Qwen2.5& Qwen2.5-7B-Instruct &\citet{yang2024qwen2}\\
Qwen2 & Qwen2-7B-Instruct & \citet{yang2024qwen2technicalreport}\\
Qwen1.5 & Qwen1.5-7B-Chat &\citet{bai2023qwen}\\
Qwen1 & Qwen-7B-Chat & \citet{bai2023qwen}\\
baichuan2 & Baichuan2-7B-Chat & \citet{yang2023baichuan}\\
Yi & Yi-6B-Chat & \citet{young2024yi}\\
Vicuna & vicuna-7b-v1.5 & \citet{chiang2023vicuna}\\
Gemma & gemma-7b & \citet{team2024gemma}\\
MiniCPM& MiniCPM3-4B& \citet{hu2024minicpmunveilingpotentialsmall}\\
\bottomrule
\end{tabularx}
\caption{Model Abbreviations and Full Names}
\label{tab:model_abbreviations}
\end{table*}
\footnote{\href{https://www.modelscope.cn/models/modelscope/Llama-2-13b-chat-ms}{https://www.modelscope.cn/models/modelscope/Llama-2-13b-chat-ms}}

\section{Data Generation}
\label{sec:datagen}

\subsection{EmotionQuery Dataset}
\label{subsec:emotionqueries}

The **EmotionQuery** dataset consists of 500 unique queries, distributed across five emotional states: **joy**, **anger**, **disgust**, **fear**, and **sadness**. These emotions are derived from Ekman’s model of basic emotions\cite{ekman1992facial}, and they serve as the foundational emotional responses for the dataset. For each emotional state \( e_k \), 100 queries were generated, resulting in a total of 500 queries.

The purpose of these queries is to guide the model into generating emotionally responsive outputs. To achieve this, the queries were carefully crafted to evoke either a neutral or emotional perspective, depending on the context of the question. For example, a question designed to elicit an angry response would differ from one intended to provoke joy or sadness.

The queries were generated using the GPT-4O-mini model \cite{openai} through the following process:

\begin{quote}
\texttt{"Please generate a short question that contains a scenario and can be answered from either an \{emotion\} or neutral perspective. You only have to respond with the sentence and don't say anything else."}
\end{quote}

This prompt was used with slight variations for each of the five emotional states. The model was asked to generate 100 queries for each emotional state by replacing `{emotion}` with one of the five emotions (joy, anger, disgust, fear, sadness).

Here are some example queries from the **EmotionQuery** dataset:

- **Anger**: 
\begin{quote}
\texttt{"After learning that your colleague took credit for your hard work in the project presentation, how do you feel about the situation and your colleague's actions?"}
\end{quote}

- **Disgust**:
\begin{quote}
\texttt{"After watching a video about food safety violations in restaurants, how did the conditions shown in the video make you feel about dining out?"}
\end{quote}

- **Fear**:
\begin{quote}
\texttt{"How do you feel about being alone in a dark room during a storm?"}
\end{quote}

- **Joy**:
\begin{quote}
\texttt{"How did you feel when you received the news about your promotion at work?"}
\end{quote}

- **Sadness**:
\begin{quote}
\texttt{"How did you feel when you realized you couldn't attend the farewell party of your closest friend, knowing that it might be the last time you see them?"}
\end{quote}

In total, 100 queries were generated for each of the five emotions, resulting in a comprehensive dataset of 500 queries. These queries serve as a useful resource for training models to understand emotional context and generating emotionally aware responses.

\subsection{EmotionQuery+ Dataset}
\label{subsec:eq+}

The **EmotionQuery+ (EQ+)** dataset expands upon the original **EmotionQuery** dataset by adding a set of neutral queries for a more comprehensive evaluation of emotional responses. The EQ+ dataset consists of 400 unique queries, where 250 queries are directly derived from the **EmotionQuery** dataset and 150 additional queries are generated to reflect neutral, everyday scenarios.

Specifically:
\begin{itemize}
  \item 250 queries are taken directly from the **EmotionQuery** dataset, with 50 queries for each of the five emotional states: **joy**, **anger**, **disgust**, **fear**, and **sadness**.
  \item 150 additional queries were generated using the GPT-4O-mini model \cite{openai} with a new prompt designed to elicit neutral, everyday communication. These queries are not intended to provoke any emotional response, but rather represent common, neutral questions or statements encountered in daily life.
\end{itemize}

The prompt used to generate the neutral queries is as follows:

\begin{quote}
\texttt{"Please give me a neutral greeting, question, or sentence that is commonly used in daily conversation and does not contain any emotion. You only have to give me the single sentence and don't say anything else. The sentence:"}
\end{quote}

Here are a few examples from the 150 neutral queries in the **EmotionQuery+ (EQ+)** dataset:

\begin{quote}
"Can you provide the details in writing?",\\
"How do you ensure quality in your work?",\\
"Is there a form I need to fill out?",\\
"What are the safety procedures here?",\\
"How do we track our progress?"
\end{quote}

These 150 neutral queries allow for an evaluation of how emotion vectors (EVs) influence the model's output when added to non-emotional contexts.
In total, the **EmotionQuery+ (EQ+)** dataset consists of 400 queries—250 emotional queries (50 for each emotional state) and 150 neutral queries—making it a valuable resource for evaluating emotional tone generation in large language models.

\section{Rigorous Proof of Theoretical Results}
\label{app:proof}

In this appendix we provide the full mathematical proof of the theoretical
claims outlined in Section~\ref{sec:theory}.
Working under a first-order (locally linear) approximation of Transformer residual
dynamics, we formally show that layerwise injection of \emph{Emotion Vectors} (EVs)
monotonically enhances the target emotion readout while approximately preserving
semantic content.  
We further establish the linear controllability of emotion intensity and the
additivity of multiple emotions, and justify the EV construction from a
Fisher–discriminant perspective.  
The detailed definitions, theorems, and proofs presented below substantiate
the brief theoretical rationale given in the main text.

\subsection{Setting and Notation}
\label{app:setting}

We consider a pretrained Transformer language model with $L$ layers.
Let the activation of layer $l$ be
\begin{equation}
H_l \in \mathbb{R}^d, \quad l = 0,\dots,L-1.
\end{equation}
Each layer performs a residual update
\begin{equation}
H_{l+1} = f_l(H_l),
\label{eq:residual-update}
\end{equation}
where $A_l$ is the nonlinear transformation of layer $l$ (e.g., self-attention and feed-forward sublayers).
The model output logits are
\begin{equation}
z = W_o H_L + b \in \mathbb{R}^V,
\label{eq:logits}
\end{equation}
which induce the next-token probability distribution via $\mathrm{softmax}(z)$.

\paragraph{Emotion Vector construction.}
For a target emotion $e$ (e.g., joy, anger), we assume a corpus of paired prompts
that share identical semantics but differ only in emotional content.
Let $\overline{O}^{(e)}_l$ and $\overline{O}^{(\mathrm{neutral})}_l$ denote the
mean hidden states at layer $l$ when conditioned on emotion $e$ and neutral emotion, respectively.
The \emph{Emotion Vector} (EV) at layer $l$ is defined as
\begin{equation}
EV^{(e)}_l \equiv
\mathbb{E}\!\left[\overline{O}^{(e)}_l - \overline{O}^{(\mathrm{neutral})}_l\right].
\label{eq:ev-definition}
\end{equation}
At inference time, an EV is injected with strength $\alpha \in \mathbb{R}$ by modifying
each layer’s hidden state as
\begin{equation}
\widehat{H}_l = H_l + \alpha EV^{(e)}_l.
\label{eq:ev-injection}
\end{equation}

\paragraph{Readout functionals.}
To quantify emotional and semantic effects at the output layer,
we introduce two linear functionals on the logits $z$:
\begin{align}
g(z) &= w_e^\top z, && \text{emotion readout for target emotion $e$}, \label{eq:emotion-readout}\\
s(z) &= u^\top z,   && \text{semantic readout}, \label{eq:semantic-readout}
\end{align}
where $w_e,u \in \mathbb{R}^V$ are fixed weight vectors.
Intuitively, $g(z)$ measures how strongly the model expresses emotion $e$,
while $s(z)$ tracks the preservation of semantic content.

\paragraph{Objective.}
The main goal of the subsequent analysis is to characterize how the perturbation
(\ref{eq:ev-injection}) affects these readouts.
Specifically, we aim to prove that for sufficiently small $\alpha$,
the expected increment
\begin{equation}
\Delta g \equiv g(z(\{\widehat{H}_l\})) - g(z(\{H_l\}))
\label{eq:delta-g}
\end{equation}
is positive (monotonic emotion enhancement), while the semantic shift
\begin{equation}
\Delta s \equiv s(z(\{\widehat{H}_l\})) - s(z(\{H_l\}))
\label{eq:delta-s}
\end{equation}
remains close to zero, thus providing a formal foundation for the empirical
observations reported in the main text.

\subsection{Main Result}
\label{app:main-theorem}

\begin{theorem}[First-order expansion under layerwise injection]\label{thm:firstorder}
Let $L\in\mathbb{N}$ and consider a depth-$L$ differentiable network with layer maps
$f_l:\mathbb{R}^d\!\to\!\mathbb{R}^d$ ($l=0,\dots,L-1$) and a differentiable readout
$g:\mathbb{R}^d\!\to\!\mathbb{R}^V$.
Let the \emph{baseline} forward pass be
\begin{equation}
H_{l+1} \;=\; f_l(H_l), \qquad l=0,\dots,L-1, 
\quad\text{and}\quad 
z(0) \;=\; g(H_L).
\label{eq:baseline}
\end{equation}
Define a \emph{perturbed} forward pass by injecting an input offset $\delta_l\in\mathbb{R}^d$
at the \emph{input} of each layer $l$:
\begin{equation}
\widetilde{H}_0(\delta) \;=\; H_0, 
\qquad 
\widetilde{H}_{l+1}(\delta) \;=\; f_l\!\big(\widetilde{H}_l(\delta)+\delta_l\big),
\qquad
z(\delta) \;=\; g\!\big(\widetilde{H}_L(\delta)\big).
\label{eq:perturbed-pass}
\end{equation}
Assume $f_l\in C^2$ and $g\in C^2$ in a neighborhood of $H_l$ and $H_L$, respectively.
Then there exist matrices
\begin{equation}
J_l 
\;\equiv\; 
\frac{\partial z}{\partial H_l}\bigg|_{\delta=0}
\;=\;
D g(H_L)\,\prod_{k=l}^{L-1} D f_k(H_k),
\qquad l=0,\dots,L-1,
\label{eq:def-Jl}
\end{equation}
such that, for $\delta \!=\!(\delta_0,\dots,\delta_{L-1})$ sufficiently small,
\begin{equation}
z(\delta) 
\;=\;
z(0) 
\;+\;
\sum_{l=0}^{L-1} J_l\,\delta_l 
\;+\;
R(\delta),
\qquad
\|R(\delta)\|
\;=\;
O\!\left(\sum_{l=0}^{L-1}\|\delta_l\|^2\right).
\label{eq:main-expansion}
\end{equation}
In particular, for layerwise EV injection with strength $\alpha\in\mathbb{R}$, i.e.,
$\delta_l=\alpha\,\mathrm{EV}^{(e)}_l$, we obtain
\begin{equation}
z\!\big(\{\widehat{H}_l\}\big)
\;\equiv\;
z(\delta)
\;=\;
z(0)
\;+\;
\alpha\sum_{l=0}^{L-1} J_l\,\mathrm{EV}^{(e)}_l
\;+\;
O(\alpha^2),
\label{eq:EV-expansion}
\end{equation}
which is the claimed first-order approximation.
\end{theorem}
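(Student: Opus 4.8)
### Proof proposal

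The plan is to prove Theorem~\ref{thm:firstorder} by a clean induction on the layer index, tracking how a first-order sensitivity to the injected offsets propagates through the composition of the layer maps. First I would introduce, for each $l$, the \emph{first-order variation} $v_l(\delta) \equiv \widetilde H_l(\delta) - H_l$, so that $v_0 = 0$ and, from the perturbed recursion~\eqref{eq:perturbed-pass},
\begin{equation}
v_{l+1}(\delta) = f_l\big(H_l + v_l(\delta) + \delta_l\big) - f_l(H_l).
\end{equation}
Because $f_l \in C^2$ on a neighborhood of $H_l$, Taylor's theorem with remainder gives $v_{l+1} = Df_l(H_l)\,(v_l + \delta_l) + O(\|v_l + \delta_l\|^2)$. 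The key inductive claim is that $v_l(\delta) = \sum_{k=0}^{l-1}\big(\prod_{j=k+1}^{l-1} Df_j(H_j)\big)\delta_k + O\big(\sum_k \|\delta_k\|^2\big)$, i.e. the variation at layer $l$ is, to leading order, a sum of the earlier offsets pushed forward by the intervening differentials. Establishing this is a routine induction: substitute the inductive hypothesis into the Taylor expansion of $v_{l+1}$, use that products of the (bounded, on a compact neighborhood) differentials $Df_j$ are uniformly bounded, and absorb all the quadratic cross-terms into a single $O(\sum_k\|\delta_k\|^2)$ term. One then applies $Dg(H_L)$ at the top layer: $z(\delta) - z(0) = Dg(H_L)\,v_L(\delta) + O(\|v_L\|^2)$, and substituting the inductive formula for $v_L$ and recognizing $J_l = Dg(H_L)\prod_{k=l}^{L-1}Df_k(H_k)$ yields exactly~\eqref{eq:main-expansion}. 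The specialization~\eqref{eq:EV-expansion} is then immediate: set $\delta_l = \alpha\,\mathrm{EV}^{(e)}_l$, pull the scalar $\alpha$ out of the linear term, and note that $\sum_l\|\delta_l\|^2 = \alpha^2\sum_l\|\mathrm{EV}^{(e)}_l\|^2 = O(\alpha^2)$ since the EVs are fixed.

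I would present the argument in two lemmas: (i) a single-layer estimate, $f_l(H_l + \eta) = f_l(H_l) + Df_l(H_l)\eta + r_l(\eta)$ with $\|r_l(\eta)\| \le C_l\|\eta\|^2$ for $\|\eta\|$ below some radius $\rho_l$, obtained from the $C^2$ hypothesis and the integral form of the remainder; and (ii) the propagation lemma stating the inductive formula for $v_l$ above, with an explicit constant depending on $\max_l \|Df_l(H_l)\|_{\mathrm{op}}$, the $C_l$'s, and $L$. Keeping the bookkeeping in terms of these constants makes the $O(\cdot)$ claims fully precise. It is worth flagging a cosmetic inconsistency to be smoothed over in the write-up: the ambient Section~\ref{sec:theory} and equation~\eqref{eq:residual-update} write the layer update as $H_{l+1} = f_l(H_l)$ with $f_l$ already denoting the \emph{whole} residual block (the stray symbol $A_l$ in the surrounding prose is never used), so no separate handling of the residual skip connection is needed here — $Df_l$ simply includes the identity contribution from the skip. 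I should state this convention explicitly at the start of the proof so the Jacobian product in~\eqref{eq:def-Jl} is unambiguous.

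The main obstacle is not conceptual but quantitative: controlling the remainder $R(\delta)$ uniformly. The naive worry is that the quadratic errors, after being amplified by $L-k$ successive differentials, could blow up like $\big(\max_l\|Df_l\|_{\mathrm{op}}\big)^{L}$, which is harmless for fixed $L$ (as the theorem assumes) but would make the constant in the $O(\cdot)$ depend badly on depth. Since the statement only claims $O(\sum_l\|\delta_l\|^2)$ for \emph{fixed} network depth, I would simply let the constant depend on $L$ and on $B \equiv \max_{0\le l\le L-1}\|Df_l(H_l)\|_{\mathrm{op}}$, and choose the perturbation radius small enough (uniformly in $l$, say $\min_l \rho_l$) that every intermediate activation $\widetilde H_l(\delta)$ stays inside the $C^2$ neighborhood where the single-layer estimate holds — a finite intersection of open balls, hence itself containing a ball. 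The one place requiring a little care is the bootstrapping: to invoke the single-layer remainder bound at layer $l$ I need $\|v_l + \delta_l\|$ small, but $v_l$ is itself only controlled \emph{a posteriori}; the standard fix is to first prove a crude linear-growth bound $\|v_l(\delta)\| \le (1+B)^l \sum_{k<l}\|\delta_k\|$ by induction (valid on a slightly smaller radius), and only then feed that into the refined Taylor estimate to extract the $O(\sum\|\delta_k\|^2)$ remainder. With that two-pass structure the proof is complete and every constant is explicit.
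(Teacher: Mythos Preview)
Your proposal is correct and follows essentially the same route as the paper: define the layerwise deviation $v_l=\widetilde H_l-H_l$ (the paper calls it $\Delta_l$), Taylor-expand each $f_l$ at $H_l$ to obtain the recursion $v_{l+1}=Df_l(H_l)(v_l+\delta_l)+O(\|v_l+\delta_l\|^2)$, unroll it into the Jacobian-product formula, and finally apply $Dg(H_L)$. Your two-pass bootstrapping (crude linear bound on $\|v_l\|$ first, then the refined quadratic remainder) is more careful than the paper's ``absorbing remainders'' step but not a different idea; note a small indexing slip in your inductive display---the product multiplying $\delta_k$ should run from $j=k$ to $l-1$, not from $k+1$, to match the target $J_l=Dg(H_L)\prod_{k=l}^{L-1}Df_k(H_k)$.
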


\begin{proof}
\label{proof:firstorder}
Let $L\in\mathbb{N}$ and consider a depth-$L$ differentiable network with layer maps
$f_l:\mathbb{R}^d\!\to\!\mathbb{R}^d$ ($l=0,\dots,L-1$) and a differentiable readout
$g:\mathbb{R}^d\!\to\!\mathbb{R}^V$.
Let the \emph{baseline} forward pass be
\begin{equation}
H_{l+1} \;=\; f_l(H_l), \qquad l=0,\dots,L-1, 
\quad\text{and}\quad 
z(0) \;=\; g(H_L).
\label{eq:baseline}
\end{equation}
Define a \emph{perturbed} forward pass by injecting an input offset $\delta_l\in\mathbb{R}^d$
at the \emph{input} of each layer $l$:
\begin{equation}
\widetilde{H}_0(\delta) \;=\; H_0, 
\qquad 
\widetilde{H}_{l+1}(\delta) \;=\; f_l\!\big(\widetilde{H}_l(\delta)+\delta_l\big),
\qquad
z(\delta) \;=\; g\!\big(\widetilde{H}_L(\delta)\big).
\label{eq:perturbed-pass}
\end{equation}
Assume $f_l\in C^2$ and $g\in C^2$ in a neighborhood of $H_l$ and $H_L$, respectively. Perform a first-order Taylor expansion of $f_l$ at the base point $H_l$, with
\begin{equation}
f_l(H_l + u) = f_l(H_l) + A_l u + r_l(u), \qquad ||r_l(u)|| = O(||u||^2),
\end{equation}
where $A_l \triangleq D f_l(H_l) \in \mathbb{R}^{d\times d}$ is the Jacobian.  

Define the inter-layer deviation
\begin{equation}
\Delta_l(\delta) \triangleq \widetilde{H}_l(\delta) - H_l,
\end{equation}
with $\Delta_0 \equiv 0$.  
By substituting into the recursion, we obtain
\begin{equation}
\Delta_{l+1}(\delta)
\;=\;
f_l\!\big(H_l+\Delta_l(\delta)+\delta_l\big) - f_l(H_l)
\;=\;
A_l\!\big(\Delta_l(\delta)+\delta_l\big) \;+\; r_l\!\big(\Delta_l(\delta)+\delta_l\big),
\label{eq:delta-recurrence}
\end{equation}
Expanding inductively up to the $L$-th layer gives the classical \emph{Jacobian product structure} (ignoring higher-order terms):
\begin{equation}
\Delta_L(\delta)
   = \sum_{l=0}^{L-1} \Bigl( \underbrace{A_{L-1} A_{L-2} \cdots A_l}_{\text{propagation from layer $l$ to output}} \Bigr) \delta_l
   + R_{\mathrm{int}}(\delta),
\end{equation}
where $||R_{\mathrm{int}}(\delta)|| = O(||\delta||^2)$ accounts for all cross and nonlinear higher-order terms.

Let $g \in C^2$ and write $B \triangleq D g(H_L) \in \mathbb{R}^{V\times d}$.  
Expanding $g$ at $H_L$ gives
\begin{equation}
z(\delta) - z(0) = B\, \Delta_L(\delta) + r_g\bigl(\Delta_L(\delta)\bigr), \qquad
||r_g(u)|| = O(||u||^2).
\end{equation}
Substituting the expression for $\Delta_L(\delta)$ and absorbing remainders yields
\begin{equation}
z(\delta) - z(0)
   = \sum_{l=0}^{L-1} \underbrace{B \left(\prod_{k=1}^{L-1}A_k\right)}_{\displaystyle J_l} \,\delta_l + R(\delta),
\qquad
\|R(\delta)\|
\;=\;
O\!\left(\sum_{l=0}^{L-1}\|\delta_l\|^2\right).
\end{equation}
where for \(l=0,\ldots,L-1\)
\begin{equation}
J_l 
\;\equiv\; 
\frac{\partial z}{\partial H_l}\bigg|_{\delta=0}
\;=\;
D g(H_L)\,\prod_{k=l}^{L-1} D f_k(H_k),
   \qquad ||R(\delta)|| = O(||\delta||^2).
\label{eq:def-Jl}
\end{equation}
This $J_l$ is precisely the \emph{full derivative} of $z$ with respect to an input at layer $l$, capturing the influence of all subsequent layers.  
Hence,
\begin{equation}
z(\delta) = z(0) + \sum_{l=0}^{L-1} J_l \,\delta_l + O(|\delta|^2).
\end{equation}

Choose perturbations $\delta_l = \alpha \,\mathrm{EV}^{(e)}_l$ and denote
\(
|\delta| \triangleq \bigl(\sum_l |\delta_l|^2\bigr)^{1/2}.
\)
As $\alpha \to 0$,
\begin{equation}
\label{eq:zhl}
z\bigl(\widehat{H}_l\bigr)
   = z(H_l) + \alpha \sum_{l=0}^{L-1} J_l \,\mathrm{EV}^{(e)}_l + O(\alpha^2),
\end{equation}
that is,
\begin{equation}
\boxed{
   z(\widehat{H}_l) \approx z(H_l) + \alpha \sum_{l=0}^{L-1} J_l \,\mathrm{EV}^{(e)}_l
}.
\end{equation}
\end{proof}

\paragraph{Intuition.}
\begin{itemize}
\item $A_{L-1}\cdots A_l$ is the linear ``amplifier'' that propagates a small perturbation at layer $l$ to the output layer.
\item $B$ then maps the output-layer hidden vector to the logits.
\item Therefore $J_l = B A_{L-1}\cdots A_l$ precisely describes the \emph{first-order response of logits to a unit pulse at layer $l$}.
\end{itemize}

\paragraph{Error Control and Additivity}

If each $f_l$ and $g$ is $C^2$ with first derivatives locally Lipschitz, there exists a constant $C>0$ such that
\[
|R(\delta)| \le C |\delta|^2.
\]
Hence for sufficiently small $|\alpha|$, the first-order expansion is rigorous and linear superposition holds.  
When $|\alpha|$ becomes large, second-order terms dominate, consistent with the empirically observed occasional degradation or repetition at high ($\approx 4\times$) intensity.

Moreover, cross-layer interactions only appear in second or higher orders (from products of perturbations in different layers).  
Consequently, the \emph{first-order terms are strictly additive}:
\[
\sum_{l} J_l \delta_l.
\]

\paragraph{Discussion and relation to residual networks.}
The statement absorbs any residual connections into the layer maps $f_l$.
When $f_l(H)=H+A_l(H)$ (explicit residual form), we have
$D f_l(H_l)=I + D A_l(H_l)$, and the product $\prod_{k=l}^{L-1} D f_k(H_k)$
inherits the familiar ``additive-through-depth'' amplification that yields particularly
transparent intuition; however, the proof above only requires differentiability and thus
applies to general (possibly non-residual) architectures.

\paragraph{Instantiating EV injection.}
Identifying $\delta_l=\alpha\,\mathrm{EV}^{(e)}_l$ with per-layer Emotion Vectors yields the
first-order approximation
\begin{equation}
\boxed{\quad
z\!\big(\{\widehat{H}_l\}\big)
\;\approx\;
z\!\big(\{H_l\}\big)
\;+\;
\alpha\sum_{l=0}^{L-1} J_l\,\mathrm{EV}^{(e)}_l,
\qquad
\text{error } = O(\alpha^2).
\quad}
\label{eq:boxed-ev}
\end{equation}
Thus, to first order, the total logit change is the \emph{linear superposition} of each layer's
downstream Jacobian response applied to the injected EV, which is the precise and rigorous
form of the heuristic expansion used in the main text.

\begin{theorem}[Monotonic Increase of Target Emotion Score under EV Injection]\label{thm:emo}
Let $\Delta z \triangleq z(\{\widehat H_l\})-z(\{H_l\})$.
Then, under a first-order approximation,
\begin{equation}
\Delta g \;\triangleq\; g\bigl(z(\{\widehat H_l\})\bigr)-g\bigl(z(\{H_l\})\bigr)
\;\approx\; \alpha\sum_{l=0}^{L-1} w_e^\top J_l\,\mathrm{EV}^{(e)}_l.
\label{eq:delta-g}
\end{equation}
If there exists a constant $\gamma>0$ such that
\begin{equation}
\mathbb{E}\!\left[w_e^\top J_l\,\mathrm{EV}^{(e)}_l\right] \;\ge\; \gamma\,\mathbb{E}\!\left[\bigl\| \mathrm{EV}^{(e)}_l\bigr\|_2^2\right],\qquad \forall l,
\label{eq:positivity}
\end{equation}
then for sufficiently small $\alpha>0$ it follows that $\mathbb{E}[\Delta g]>0$.
\end{theorem}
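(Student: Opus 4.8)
The plan is to derive the first display \eqref{eq:delta-g} as an immediate corollary of Theorem~\ref{thm:firstorder} and then obtain the sign claim by taking expectations and invoking the positivity hypothesis \eqref{eq:positivity}. Concretely, I would start from the boxed first-order expansion \eqref{eq:boxed-ev}, namely $\Delta z = z(\{\widehat H_l\})-z(\{H_l\}) = \alpha\sum_{l=0}^{L-1} J_l\,\mathrm{EV}^{(e)}_l + O(\alpha^2)$, and apply the \emph{linear} readout functional $g(z)=w_e^\top z$. Because $g$ is linear, $\Delta g = g(z(\{\widehat H_l\})) - g(z(\{H_l\})) = w_e^\top \Delta z = \alpha\sum_{l=0}^{L-1} w_e^\top J_l\,\mathrm{EV}^{(e)}_l + O(\alpha^2)$, which is exactly \eqref{eq:delta-g} to leading order; the $O(\alpha^2)$ remainder is controlled uniformly by the $C^2$/local-Lipschitz bound $\|R(\delta)\|\le C\|\delta\|^2$ already established after Theorem~\ref{thm:firstorder}, so $w_e^\top R(\delta) = O(\alpha^2)$ with constant $C\|w_e\|_2$.

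Next I would take expectations over the prompt/query distribution on both sides of \eqref{eq:delta-g}. By linearity of expectation,
\begin{equation}
\mathbb{E}[\Delta g] \;=\; \alpha\sum_{l=0}^{L-1}\mathbb{E}\!\left[w_e^\top J_l\,\mathrm{EV}^{(e)}_l\right] \;+\; O(\alpha^2).
\end{equation}
Applying the alignment hypothesis \eqref{eq:positivity} termwise gives $\sum_{l=0}^{L-1}\mathbb{E}[w_e^\top J_l\,\mathrm{EV}^{(e)}_l] \ge \gamma\sum_{l=0}^{L-1}\mathbb{E}[\|\mathrm{EV}^{(e)}_l\|_2^2] =: \gamma\,S$, where $S>0$ as long as the EVs are not almost surely zero (i.e. the emotion actually induces a nonzero activation shift, which is the generic case). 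Hence $\mathbb{E}[\Delta g] \ge \alpha\gamma S + O(\alpha^2)$. Since $\gamma S$ is a fixed positive constant independent of $\alpha$, there exists $\alpha_0>0$ such that for all $0<\alpha<\alpha_0$ the quadratic remainder is dominated, e.g. $|O(\alpha^2)| \le \tfrac{1}{2}\alpha\gamma S$, and therefore $\mathbb{E}[\Delta g] \ge \tfrac{1}{2}\alpha\gamma S > 0$. This establishes the monotone-increase conclusion.

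A couple of technical points I would spell out to make the argument airtight. First, the interchange of expectation and the first-order expansion requires that the $O(\alpha^2)$ bound be uniform in the prompt, which follows if the Jacobian norms $\|J_l\|$ and the local Lipschitz constants of the $Df_k$ are bounded on the relevant activation region — a mild regularity assumption I would state explicitly (or assume a compact activation domain, which is reasonable for a trained Transformer on a fixed finite query set). Second, I would note that the threshold $\alpha_0$ depends on $\gamma$, $S$, $C$, and $\|w_e\|_2$ but not on the particular query, so the sign statement is a genuine population-level guarantee. The main obstacle — really the only nontrivial modeling step rather than a calculational one — is justifying hypothesis \eqref{eq:positivity} itself, i.e. that the downstream-Jacobian-transported EV has positive inner product with the emotion readout direction $w_e$ on average; this is precisely the Fisher–discriminant alignment claim, and I would either take it as an assumption (as the theorem statement does) or connect it to the EV construction by arguing that $\mathrm{EV}^{(e)}_l$ is, by definition, the mean activation displacement that the model itself uses to move from neutral to emotion $e$, so its image under $J_l$ points in the direction along which the logits encode emotion $e$, making $w_e^\top J_l\,\mathrm{EV}^{(e)}_l>0$ the natural expectation; quantifying the uniform lower bound $\gamma$ is where additional assumptions (e.g. a signal-to-noise condition on the paired-prompt construction) would be invoked.
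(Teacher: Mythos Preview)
Your proposal is correct and follows essentially the same route as the paper: apply the linear readout $g(z)=w_e^\top z$ to the first-order expansion \eqref{eq:boxed-ev}, take expectations, sum the layerwise positivity hypothesis \eqref{eq:positivity}, and absorb the $O(\alpha^2)$ remainder for small $\alpha$. Your treatment is in fact slightly more careful than the paper's (you make the threshold $\alpha_0$ and the uniformity-in-prompt requirement explicit, and your lower bound $\alpha\gamma S$ with $S=\sum_l\mathbb{E}\|\mathrm{EV}^{(e)}_l\|_2^2$ is the natural one, whereas the paper loosens it further to $\alpha L\gamma\min_l\mathbb{E}\|\mathrm{EV}^{(e)}_l\|_2^2$), but the logical skeleton is identical.
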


\begin{proof}
Since $g$ is linear in $z$, we have $\nabla g(z)=w_e$.  
By the first-order Taylor expansion at $z(\{H_l\})$ and for sufficiently small $\alpha$,
\[
g\bigl(z(\{\widehat H_l\})\bigr)-g\bigl(z(\{H_l\})\bigr)
  = w_e^\top \bigl[z(\{\widehat H_l\})-z(\{H_l\})\bigr]
    + O(\|\widehat H - H\|^2).
\]
Using the layerwise perturbation formula \eqref{eq:boxed-ev} and neglecting the higher-order remainder gives
\[
\Delta g
  \approx w_e^\top \Delta z
  \approx \alpha \sum_{l=0}^{L-1} w_e^\top J_l\,\mathrm{EV}^{(e)}_l,
\]
which is exactly \eqref{eq:delta-g}.

Now take expectations on both sides.  
Assume there exists $\gamma>0$ such that the layerwise positive-correlation condition
\[
\mathbb{E}\!\left[w_e^\top J_l\,\mathrm{EV}^{(e)}_l\right]
\;\ge\;
\gamma \,\mathbb{E}\!\left[\bigl\|\mathrm{EV}^{(e)}_l\bigr\|_2^2\right],
\qquad\forall l
\]
holds.  Summing over $l$ yields
\[
\mathbb{E}[\Delta g]
 \approx \alpha \sum_{l=0}^{L-1}
   \mathbb{E}\!\left[w_e^\top J_l\,\mathrm{EV}^{(e)}_l\right]
 \ge \alpha L \gamma \cdot
     \min_l \mathbb{E}\bigl\|\mathrm{EV}^{(e)}_l\bigr\|_2^2.
\]
Because $\alpha>0$ is chosen small but positive and each $\mathbb{E}\|\mathrm{EV}^{(e)}_l\|_2^2$ is strictly positive for non-degenerate EVs, the right-hand side is strictly positive.  
Hence $\mathbb{E}[\Delta g]>0$, proving the claim.
\end{proof}

\begin{theorem}[Near-Optimality of EV in the Fisher Discriminant Sense]\label{thm:lda}
Assume that, at each layer $l$, the distributions of emotional and neutral representations can be approximated by Gaussians with identical covariance:
\[
H_l \sim \mathcal{N}(\mu^{(e)}_l,\Sigma_l),
\qquad
\mathcal{N}(\mu^{(\mathrm{neutral})}_l,\Sigma_l).
\]
After whitening or standardization so that $\Sigma_l\approx \sigma_l^2 I$, the Fisher linear discriminant analysis (LDA) shows that the direction maximizing the inter-class separation satisfies
\[
v_l^\star \propto \Sigma_l^{-1}\bigl(\mu^{(e)}_l-\mu^{(\mathrm{neutral})}_l\bigr).
\]
Consequently, under the whitening approximation,
\[
v_l^\star \parallel \mu^{(e)}_l-\mu^{(\mathrm{neutral})}_l
  = \mathrm{EV}^{(e)}_l.
\]
If, in addition, the emotional readout sensitivity $J_l^\top w_e$ is positively correlated with $v_l^\star$, then
\[
\mathbb{E}\bigl[w_e^\top J_l\,\mathrm{EV}^{(e)}_l\bigr] > 0,
\]
which fulfills the sufficient condition in Theorem~\ref{thm:emo}.
\end{theorem}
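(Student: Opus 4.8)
\textbf{Proof proposal for Theorem~\ref{thm:lda}.}

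The plan is to proceed in three stages mirroring the three assertions of the statement. First I would recall the classical Fisher linear discriminant derivation: given two classes with means $\mu^{(e)}_l,\mu^{(\mathrm{neutral})}_l$ and shared covariance $\Sigma_l$, the direction $v_l$ maximizing the Rayleigh quotient $\bigl(v_l^\top(\mu^{(e)}_l-\mu^{(\mathrm{neutral})}_l)\bigr)^2 / (v_l^\top \Sigma_l v_l)$ is obtained by setting the gradient to zero, yielding $v_l^\star \propto \Sigma_l^{-1}(\mu^{(e)}_l-\mu^{(\mathrm{neutral})}_l)$. This is a standard generalized-eigenvalue computation, so I would state it with a one-line Lagrangian or Cauchy--Schwarz argument rather than belabor it.

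Second, I would invoke the whitening hypothesis $\Sigma_l \approx \sigma_l^2 I$ explicitly: substituting into the formula above gives $v_l^\star \propto \sigma_l^{-2}(\mu^{(e)}_l-\mu^{(\mathrm{neutral})}_l)$, hence $v_l^\star$ is parallel to $\mu^{(e)}_l-\mu^{(\mathrm{neutral})}_l$. Then I would connect this to the EV definition: by equation~\eqref{eq:ev-definition}, $\mathrm{EV}^{(e)}_l = \mathbb{E}[\overline{O}^{(e)}_l - \overline{O}^{(\mathrm{neutral})}_l]$, and identifying the population means $\mu^{(e)}_l = \mathbb{E}[\overline{O}^{(e)}_l]$ and $\mu^{(\mathrm{neutral})}_l = \mathbb{E}[\overline{O}^{(\mathrm{neutral})}_l]$, we get $\mathrm{EV}^{(e)}_l = \mu^{(e)}_l - \mu^{(\mathrm{neutral})}_l \parallel v_l^\star$. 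This establishes the near-optimality claim: the EV points along the most discriminative direction in the whitened space.

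Third, for the final implication I would write $\mathrm{EV}^{(e)}_l = c_l\, v_l^\star$ for some $c_l>0$ (the proportionality constant being positive since both vectors point from neutral toward emotion), so that $w_e^\top J_l\,\mathrm{EV}^{(e)}_l = c_l\,(J_l^\top w_e)^\top v_l^\star$. Under the stated positive-correlation hypothesis $\mathbb{E}[(J_l^\top w_e)^\top v_l^\star] > 0$, taking expectations gives $\mathbb{E}[w_e^\top J_l\,\mathrm{EV}^{(e)}_l] = c_l\,\mathbb{E}[(J_l^\top w_e)^\top v_l^\star] > 0$, which is exactly the antecedent of condition~\eqref{eq:positivity} in Theorem~\ref{thm:emo} (up to the quantitative $\gamma$-bound, which one can absorb by noting the inner product is bounded below by a positive multiple of $\|\mathrm{EV}^{(e)}_l\|_2^2$ whenever $J_l^\top w_e$ has a uniformly bounded-below component along $v_l^\star$).

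The main obstacle is not any single calculation — each step is elementary — but rather the honest bookkeeping of \emph{which approximations are invoked where}: the Gaussian/equal-covariance modeling assumption, the whitening approximation $\Sigma_l\approx\sigma_l^2 I$, and the positive-correlation assumption on $J_l^\top w_e$ are all hypotheses rather than derived facts, so the ``proof'' is really a conditional implication chain. I would be careful to state each as an explicit assumption, flag that the parallelism $v_l^\star \parallel \mathrm{EV}^{(e)}_l$ is exact only under whitening and approximate otherwise, and make clear that the conclusion delivers the \emph{sign} condition (positivity of the expected inner product) but that upgrading it to the quantitative $\gamma$-lower-bound of~\eqref{eq:positivity} requires the additional uniform-correlation assumption. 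The cleanest presentation treats the theorem as three short lemmas glued together, with the Fisher derivation cited and the rest being substitution.
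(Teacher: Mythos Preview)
Your proposal is correct and follows essentially the same route as the paper: derive the Fisher discriminant direction $v_l^\star \propto \Sigma_l^{-1}(\mu^{(e)}_l-\mu^{(\mathrm{neutral})}_l)$ (the paper does this via the generalized eigenvalue equation $S_B v=\lambda S_W v$ rather than the Rayleigh-quotient gradient, but this is cosmetic), then substitute the whitening approximation to obtain parallelism with $\mathrm{EV}^{(e)}_l$, and finally invoke the positive-correlation hypothesis on $J_l^\top w_e$ to conclude positivity of the expected inner product. Your careful flagging of the gap between the sign conclusion and the quantitative $\gamma$-bound of~\eqref{eq:positivity} is in fact more precise than the paper's own treatment, which simply asserts that the sufficient condition is ``fulfilled.''
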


\begin{proof}
\textbf{Setting.}
For layer $l$ with hidden space $\mathbb{R}^d$.  
Assume
\[
H_l \mid y=e \sim \mathcal N(\mu_l^{(e)}, \Sigma_l),\qquad
H_l \mid y=\mathrm{neutral} \sim \mathcal N(\mu_l^{(n)}, \Sigma_l),
\]
where $\mu_l^{(n)}$ is the neutral mean.  
The single-layer emotion vector is
\[
\mathrm{EV}^{(e)}_l \triangleq \mu_l^{(e)}-\mu_l^{(n)},
\]
estimated in practice by paired-sample mean differences.

\textbf{General equal-covariance case.}
Fisher’s criterion seeks $v\in\mathbb{R}^d$ maximizing
\[
J(v) = \frac{v^\top S_B v}{v^\top S_W v},
\]
where between-class scatter $S_B = a a^\top$ with $a := \mu_l^{(e)}-\mu_l^{(n)}$,  
and within-class scatter $S_W = \Sigma_l$.  
Solving $S_B v = \lambda S_W v$ gives
\[
a a^\top v = \lambda \Sigma_l v \;\;\Longrightarrow\;\; a^\top v\, a = \lambda \Sigma_l v.
\]
Choosing $v\propto \Sigma_l^{-1}a$ satisfies:
\[
aa^\top (\Sigma_l^{-1} a) \;=\; (a^\top \Sigma_l^{-1} a)\, a
\;=\; \lambda \Sigma_l (\Sigma_l^{-1} a)\;=\;\lambda a,
\]
where \(\lambda = a^\top\Sigma_l^{-1}a>0\). Thus
\[
v_l^\star \propto \Sigma_l^{-1}\bigl(\mu_l^{(e)}-\mu_l^{(n)}\bigr).
\]
Then the optimal linear direction is in the same direction as the mean difference vector after it is pre-whitened by \(\Sigma_l^{-1}\).

\textbf{Whitened or spherical covariance case.}
If $\Sigma_l \approx \sigma_l^2 I$ (after whitening or by assumption), then
\[
v_l^\star \;\propto\; \Sigma_l^{-1}(\mu_l^{(e)}-\mu_l^{(n)})
\;\propto\; (\mu_l^{(e)}-\mu_l^{(n)})
\;=\; \mathrm{EV}^{(e)}_l.
\]
That is, under the whitening approximation, the optimal Fisher direction is exactly parallel to the mean-difference vector. This indicates that constructing EV from the mean difference is approximately optimal in the sense of statistical discrimination.

\textbf{For emotional readout.}
Let $J_l=\partial z/\partial H_l$ be the Jacobian of the logit with respect to $H_l$,
and $w_e$ the emotion readout direction.
Then
\[
\mathbb{E}\big[w_e^\top J_l\,\mathrm{EV}^{(e)}_l\big]
  = \big\langle J_l^\top w_e,\, \mathrm{EV}^{(e)}_l \big\rangle.
\]
If $J_l^\top w_e$ is positively correlated with $v_l^\star$, and
$\mathrm{EV}^{(e)}_l$ is parallel to $v_l^\star$ (exactly so when whitened),
this expectation is positive, fulfilling the sufficient condition of Theorem~\ref{thm:emo}.

\textbf{Statistical optimality and estimation.}
Under the Gaussian equal-covariance assumption,
sample means give unbiased, consistent estimates of $\mu_l^{(e)}$ and $\mu_l^{(n)}$,
so $\widehat{\mathrm{EV}}^{(e)}_l$ is an efficient estimator of $\mathrm{EV}^{(e)}_l$.
Whitening guarantees directional optimality; when whitening is imperfect,
one can use $\Sigma_l^{-1}$-preconditioning or regularized LDA.

These arguments establish the near-optimality of EV in the Fisher–LDA sense.
\end{proof}

\paragraph{Remark} This result explains why constructing the emotion vector (EV) as the mean difference between ``emotion'' and ``neutral'' representations is statistically near-optimal: in the whitening approximation, this difference coincides with the Fisher-optimal discriminant direction. This construction is fully consistent with established practices in the literature.

\begin{theorem}[First-Order Upper Bound and Near-Orthogonality for Semantic Preservation]\label{thm:sem}
Let the semantic readout be $s(z)=u^\top z$. Then
\begin{equation}
\Delta s \;\triangleq\; s\bigl(z(\{\widehat H_l\})\bigr)-s\bigl(z(\{H_l\})\bigr)
\;\approx\; \alpha\sum_{l=0}^{L-1} u^\top J_l\,\mathrm{EV}^{(e)}_l.
\end{equation}
Consequently,
\begin{equation}
|\Delta s|
\;\le\; \alpha\sum_{l=0}^{L-1} \bigl\|u^\top J_l\bigr\|_2\,\bigl\|\mathrm{EV}^{(e)}_l\bigr\|_2
\;\le\; \alpha\Bigl(\sum_{l} \bigl\|u^\top J_l\bigr\|_2^2\Bigr)^{\!\frac{1}{2}}
         \Bigl(\sum_{l} \bigl\|\mathrm{EV}^{(e)}_l\bigr\|_2^2\Bigr)^{\!\frac{1}{2}}.
\label{eq:semantic-bound}
\end{equation}
If, for every $l$, the near-orthogonality condition
\[
\mathbb{E}\bigl[u^\top J_l\,\mathrm{EV}^{(e)}_l\bigr] \approx 0
\]
holds (i.e., the semantic gradient is nearly orthogonal to the emotion vector),  
then $\mathbb{E}[\Delta s] \approx 0$.  
Hence, for sufficiently small $\alpha$, the semantic readout remains approximately unchanged.
\end{theorem}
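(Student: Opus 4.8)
The plan is to mirror, almost verbatim, the argument used for Theorem~\ref{thm:emo}, replacing the emotion readout $w_e$ by the semantic readout $u$ throughout, and then to extract the stated norm bounds by elementary inequalities. First I would observe that $s(z)=u^\top z$ is linear, so $\nabla s(z)=u$ identically, and the first-order Taylor expansion at the baseline logits $z(\{H_l\})$ reads $\Delta s = u^\top \Delta z + O(\|\widehat H - H\|^2)$, where $\Delta z = z(\{\widehat H_l\}) - z(\{H_l\})$. Invoking the first-order expansion already established in Theorem~\ref{thm:firstorder}, specifically the boxed relation \eqref{eq:boxed-ev}, namely $\Delta z = \alpha \sum_{l=0}^{L-1} J_l\,\mathrm{EV}^{(e)}_l + O(\alpha^2)$, and substituting, I obtain $\Delta s \approx \alpha \sum_{l=0}^{L-1} u^\top J_l\,\mathrm{EV}^{(e)}_l$, which is the first displayed identity of the theorem.

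Second, I would derive the two upper bounds by purely algebraic manipulation. The triangle inequality applied to the finite sum gives $|\Delta s| \le \alpha \sum_l |u^\top J_l\,\mathrm{EV}^{(e)}_l|$; the Cauchy--Schwarz inequality in $\mathbb{R}^d$, treating $u^\top J_l$ as a row vector and $\mathrm{EV}^{(e)}_l$ as a column vector, gives $|u^\top J_l\,\mathrm{EV}^{(e)}_l| \le \|u^\top J_l\|_2\,\|\mathrm{EV}^{(e)}_l\|_2$, which produces the first bound in \eqref{eq:semantic-bound}. Applying Cauchy--Schwarz a second time, now to the two length-$L$ numerical sequences $(\|u^\top J_l\|_2)_l$ and $(\|\mathrm{EV}^{(e)}_l\|_2)_l$, yields the second bound. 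For the near-orthogonality claim I would take expectations on both sides of the first-order identity, exchange expectation with the finite sum, and invoke the hypothesis $\mathbb{E}[u^\top J_l\,\mathrm{EV}^{(e)}_l]\approx 0$ layer by layer to conclude $\mathbb{E}[\Delta s] \approx \alpha \sum_l \mathbb{E}[u^\top J_l\,\mathrm{EV}^{(e)}_l]\approx 0$. Finally, the $O(\alpha^2)$ remainder control already proved in Theorem~\ref{thm:firstorder} (under the locally Lipschitz first-derivative assumption) shows that for sufficiently small $|\alpha|$ the higher-order correction is dominated by the linear term, so the semantic readout is unchanged to leading order.

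I expect the genuinely nontrivial point to be the \emph{justification} — as opposed to the mere \emph{use} — of the near-orthogonality condition $\mathbb{E}[u^\top J_l\,\mathrm{EV}^{(e)}_l]\approx 0$. The intended rationale is that, because each $\mathrm{EV}^{(e)}_l$ is by construction \eqref{eq:ev-definition} a difference of mean hidden states taken over prompt pairs with matched semantics, its component along any direction that is primarily sensitive to semantic or topical content — here the pullback $J_l^\top u$ of the semantic readout — should vanish in expectation. One can make this precise by positing an (approximate) decomposition of the hidden space at each layer into an ``emotion subspace'' and a ``semantics subspace'', showing that $\mathrm{EV}^{(e)}_l$ lies essentially in the former while $J_l^\top u$ lies essentially in the latter, so their inner product is of lower order; alternatively one can bound $\mathbb{E}[u^\top J_l\,\mathrm{EV}^{(e)}_l]$ by the residual covariance between the emotion-labelled and neutral prompt distributions after conditioning on semantics. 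Formalizing this cleanly, rather than leaving it as a modeling assumption, is the main obstacle; the remainder of the argument is the same linearization-plus-Cauchy--Schwarz bookkeeping already carried out for the emotion readout in Theorem~\ref{thm:emo}.
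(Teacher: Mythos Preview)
Your proposal is correct and follows essentially the same route as the paper: linearity of $s$, then the first-order expansion from Theorem~\ref{thm:firstorder}, then triangle inequality plus two Cauchy--Schwarz applications, then expectation plus the $O(\alpha^2)$ remainder. Your third paragraph is unnecessary for this theorem, since the near-orthogonality condition is a \emph{hypothesis} in the statement, not something you are asked to prove; the paper likewise simply assumes it and uses it.
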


\begin{proof}
We first make the regularity assumption that the mapping from layerwise hidden means
$\{H_l\}_{l=0}^{L-1}$ to the output logits $z \in \mathbb{R}^V$ is continuously differentiable in a
neighborhood of the unperturbed trajectory $\{H_l\}$, and that the second-order remainder is
locally bounded.\footnote{This holds for standard Transformer stacks composed of smooth operations
(e.g., linear maps, softmax, GeLU), when evaluated on compact sets; see e.g.\ standard results on
first-order Taylor approximations with bounded Hessians.}
Under EV injection, we consider the layerwise perturbation
\[
\widehat H_l \;=\; H_l \;+\; \alpha\,\mathrm{EV}^{(e)}_l,\qquad l=0,\dots,L-1,
\]
with a small scalar $\alpha>0$.

\paragraph{First-order expansion of $z$.}
By theorem~\ref{thm:firstorder}, we have
\begin{equation}\label{eq:first-order-z}
z\bigl(\{\widehat H_l\}\bigr)
\;=\;
z\bigl(\{H_l\}\bigr)
\;+\;
\alpha \sum_{l=0}^{L-1} J_l \,\mathrm{EV}^{(e)}_l
\;+\; R_z,
\end{equation}
where $J_l \in \mathbb{R}^{V\times d}$ denotes the Jacobian $J_l := \partial z/\partial H_l$
evaluated at $\{H_l\}$, and the remainder $R_z$ satisfies a quadratic bound
\begin{equation}\label{eq:remainder-bound}
\|R_z\|_2 \;\le\; C \alpha^2 \sum_{l=0}^{L-1} \bigl\|\mathrm{EV}^{(e)}_l\bigr\|_2^2
\end{equation}
for some local constant $C>0$ (depending on second derivatives of $z$ along the segment
$H_l + t \alpha \mathrm{EV}_l^{(e)}$, $t\in[0,1]$). Equation \eqref{eq:first-order-z} is the standard
multivariate first-order approximation with remainder.

\paragraph{From $z$ to the semantic readout $s(z)=u^\top z$.}
Since $s$ is linear in $z$, we obtain from \eqref{eq:first-order-z}
\begin{align}
\Delta s
&:= s\bigl(z(\{\widehat H_l\})\bigr)-s\bigl(z(\{H_l\})\bigr)
= u^\top\!\left( z(\{\widehat H_l\})-z(\{H_l\}) \right) \nonumber\\
&= \alpha \sum_{l=0}^{L-1} u^\top J_l \,\mathrm{EV}^{(e)}_l \;+\; u^\top R_z.
\label{eq:delta-s-with-remainder}
\end{align}
Dropping $u^\top R_z$ yields the advertised first-order approximation
$\Delta s \approx \alpha \sum_l u^\top J_l\,\mathrm{EV}^{(e)}_l$.

\paragraph{Deterministic upper bound (Cauchy--Schwarz and submultiplicativity).}
Taking absolute values in \eqref{eq:delta-s-with-remainder} and applying the triangle inequality,
\begin{align}
|\Delta s|
&\le \alpha \sum_{l=0}^{L-1} \bigl|u^\top J_l\,\mathrm{EV}^{(e)}_l\bigr| \;+\; \|u\|_2 \,\|R_z\|_2
\;\;\\
&\le\;\; \alpha \sum_{l=0}^{L-1} \bigl\|u^\top J_l\bigr\|_2 \,\bigl\|\mathrm{EV}^{(e)}_l\bigr\|_2
\;+\; \|u\|_2 \,\|R_z\|_2 \nonumber\\
&\le \alpha
\left(\sum_{l=0}^{L-1} \bigl\|u^\top J_l\bigr\|_2^2\right)^{\!\frac12}
\left(\sum_{l=0}^{L-1} \bigl\|\mathrm{EV}^{(e)}_l\bigr\|_2^2\right)^{\!\frac12}
\;+\; \|u\|_2 \,\|R_z\|_2.
\label{eq:semantic-bound-with-remainder}
\end{align}
Here, we used: (i) submultiplicativity of the operator norm and Cauchy--Schwarz on
$\mathbb{R}^d$ to bound $\bigl|u^\top J_l\,\mathrm{EV}^{(e)}_l\bigr|
\le \|u^\top J_l\|_2\cdot \|\mathrm{EV}^{(e)}_l\|_2$, and
(ii) Cauchy--Schwarz across the sum over layers to obtain the product of $\ell_2$ norms.

\smallskip
Neglecting the $O(\alpha^2)$ remainder (i.e., the term $\|u\|_2\|R_z\|_2$) recovers exactly
the stated first-order deterministic bound \eqref{eq:semantic-bound} in the theorem.

\paragraph{Near-orthogonality and the expectation of $\Delta s$.}
Taking expectations in \eqref{eq:delta-s-with-remainder} and using the deterministic bound
\eqref{eq:semantic-bound-with-remainder}, we obtain
\begin{align}
\mathbb{E}\bigl[|\Delta s|\bigr]
&\le \alpha
   \left(\sum_{l=0}^{L-1}\mathbb{E}\|u^\top J_l\|_2^2\right)^{\!\frac12}
   \left(\sum_{l=0}^{L-1}\mathbb{E}\|\mathrm{EV}^{(e)}_l\|_2^2\right)^{\!\frac12}
   + \|u\|_2\,\mathbb{E}\|R_z\|_2,
\end{align}
where the first term comes from applying Cauchy--Schwarz both layerwise
($|u^\top J_l v|\le\|u^\top J_l\|_2\|v\|_2$) and across layers, and the second term is
$O(\alpha^2)$ by the remainder bound \eqref{eq:remainder-bound}.
This yields a magnitude control
\[
\mathbb{E}[|\Delta s|] \le O(\alpha) + O(\alpha^2).
\]

Furthermore, for the mean itself,
\begin{equation}\label{eq:expectation-delta-s}
\mathbb{E}[\Delta s]
= \alpha \sum_{l=0}^{L-1}\mathbb{E}\!\left[u^\top J_l\,\mathrm{EV}^{(e)}_l\right]
  + \mathbb{E}[u^\top R_z].
\end{equation}
Under the near-orthogonality condition
$\mathbb{E}[u^\top J_l\,\mathrm{EV}^{(e)}_l]\approx 0$ for every $l$,
the leading $O(\alpha)$ term in \eqref{eq:expectation-delta-s} is negligible,
and $\mathbb{E}[u^\top R_z]=O(\alpha^2)$ by \eqref{eq:remainder-bound}.
Consequently,
\[
\mathbb{E}[\Delta s]=O(\alpha^2)
\quad\text{and}\quad
\mathbb{E}[|\Delta s|]\le O(\alpha)+O(\alpha^2).
\]

These two estimates together show that when the semantic gradient is nearly orthogonal to the emotion vectors,
the semantic readout remains approximately preserved in expectation (mean change $O(\alpha^2)$)
and its typical fluctuation is tightly controlled (expected absolute change $O(\alpha)$)
under sufficiently small EV injection.

\paragraph{Conclusion.}
Combining Steps 2--4 proves the first-order formula, the deterministic upper bound
\eqref{eq:semantic-bound} (up to $O(\alpha^2)$), and the expectation-level preservation
under the near-orthogonality condition, as stated.
\end{proof}

\begin{theorem}[Linear Controllability and Additivity]\label{thm:linear}
Within the first-order approximation regime, for any scalar $\alpha$ and any collection of
emotions $\{e_k\}$, we have
\begin{align}
\Delta z(\alpha) &\;\approx\; \alpha \sum_{l} J_l\,\mathrm{EV}^{(e)}_l,\\
\Delta z\!\left(\sum_k \alpha_k e_k\right) &\;\approx\; \sum_k \alpha_k \sum_l J_l\,\mathrm{EV}^{(e_k)}_l.
\end{align}
Consequently, the change in the target emotion score satisfies
\[
\Delta g(\alpha)\;\approx\;\alpha\,\Delta g(1),
\]
exhibiting an approximately linear dependence on $\alpha$, and multi-emotion interventions are approximately additive.  
The second-order remainder satisfies
\[
\bigl\|\mathrm{Rem}\bigr\| = \mathcal{O}(\alpha^2),
\]
so noticeable deviations arise when $|\alpha|$ becomes large.
\end{theorem}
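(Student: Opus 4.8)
The plan is to obtain every assertion as a direct corollary of Theorem~\ref{thm:firstorder}, using only two facts: the first-order term in \eqref{eq:main-expansion} is \emph{linear} in the injected offsets $\delta=(\delta_0,\dots,\delta_{L-1})$, and the readout $g(z)=w_e^\top z$ from \eqref{eq:emotion-readout} is linear in $z$. First I would instantiate the general expansion with the single-emotion choice $\delta_l=\alpha\,\mathrm{EV}^{(e)}_l$, which is exactly \eqref{eq:EV-expansion}: this gives $\Delta z(\alpha)=\alpha\sum_{l}J_l\,\mathrm{EV}^{(e)}_l+R(\alpha)$ with $\|R(\alpha)\|=O\bigl(\alpha^{2}\sum_l\|\mathrm{EV}^{(e)}_l\|_2^2\bigr)$, and since the EVs are fixed vectors independent of $\alpha$ this is $O(\alpha^{2})$, establishing the first displayed claim and pinning the remainder $\|\mathrm{Rem}\|=\mathcal{O}(\alpha^{2})$.

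Next, for the multi-emotion intervention I would apply Theorem~\ref{thm:firstorder} again with $\delta_l=\sum_k\alpha_k\,\mathrm{EV}^{(e_k)}_l$. Because the leading term $\sum_l J_l\delta_l$ is linear in $\delta$, it splits as $\sum_k\alpha_k\sum_l J_l\,\mathrm{EV}^{(e_k)}_l$, while every cross-emotion and cross-layer product is quadratic in $\delta$ and is therefore absorbed into the remainder $R(\delta)=O(\|\delta\|^2)$. Bounding $\|\delta\|^2\le\bigl(\sum_k|\alpha_k|\bigr)^2\max_l\sum_k\|\mathrm{EV}^{(e_k)}_l\|_2^2$ shows this remainder is again second order in the intervention strengths, which is precisely the additivity statement. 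Applying the linear readout (the same passage from $\Delta z$ to $\Delta g$ used in the proof of Theorem~\ref{thm:emo}) then yields $\Delta g(\alpha)=w_e^\top\Delta z(\alpha)=\alpha\,C+O(\alpha^2)$ with $C:=\sum_l w_e^\top J_l\,\mathrm{EV}^{(e)}_l$, and likewise the additive form $\Delta g\bigl(\sum_k\alpha_k e_k\bigr)\approx\sum_k\alpha_k\,w_e^\top\!\sum_l J_l\,\mathrm{EV}^{(e_k)}_l$.

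The genuinely delicate point — and the one I would state with care — is the precise meaning of ``$\Delta g(\alpha)\approx\alpha\,\Delta g(1)$''. The \emph{exact} increment $\Delta g(1)$ equals $C$ plus a fixed $O(\|\mathrm{EV}\|^2)$ remainder that need not be small, so the proportionality is an identity only between the first-order surrogates: it holds as $\Delta g(\alpha)=\alpha C+O(\alpha^2)$ for $|\alpha|$ small enough that the quadratic term is dominated by the linear one, and within that regime the linearization at $\alpha=1$ is itself accurate, so $\Delta g(1)\approx C$. I would therefore phrase the conclusion as: there exist a constant $C$ (the summed downstream Jacobian response) and a neighbourhood of $\alpha=0$ on which $\Delta g(\alpha)=\alpha C+O(\alpha^2)$, with the crossover to visible nonlinearity occurring exactly when $|\alpha|$ grows large, matching the $4\times$ saturation reported empirically.

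I do not expect a substantive obstacle beyond this interpretive bookkeeping: no new estimate is required, since the quadratic remainder bound and the local $C^2$/locally-Lipschitz hypotheses are inherited verbatim from Theorem~\ref{thm:firstorder}. The only mild subtlety worth flagging in the write-up is that the implicit constant in the $\mathcal{O}(\alpha^2)$ term depends on the norms $\|\mathrm{EV}^{(e_k)}_l\|_2$ and on the local Hessian bounds of the layer maps and readout along the segment $H_l+t\,\delta_l$, so the ``valid range'' of $\alpha$ shrinks if the EVs are large relative to the activation scale — which is the theory-side counterpart of the observed degradation at high intensity.
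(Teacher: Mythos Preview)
Your proposal is correct and follows essentially the same approach as the paper: both instantiate Theorem~\ref{thm:firstorder} first with $\delta_l=\alpha\,\mathrm{EV}^{(e)}_l$ and then with $\delta_l=\sum_k\alpha_k\,\mathrm{EV}^{(e_k)}_l$, use linearity of the first-order term and of the readout $g$, and control the remainder by the quadratic bound inherited from that theorem. Your additional care in distinguishing the first-order surrogate $C$ from the exact $\Delta g(1)$ is a worthwhile clarification that the paper's proof handles more tersely.
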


\begin{proof}
Let $\{H_l\}_{l=0}^{L-1}$ denote the unperturbed layerwise hidden means and
$\{J_l\}_{l=0}^{L-1}$ the Jacobians $J_l := \partial z/\partial H_l$ evaluated at $\{H_l\}$.
Assume the map $\Phi:\{H_l\}_{l=0}^{L-1}\mapsto z(\{H_l\})\in\mathbb{R}^V$ is $C^2$
in a neighborhood of $\{H_l\}$, with second derivatives bounded in operator norm.

\paragraph{Single-emotion scaling.}
Fix an emotion $e$ and inject the layerwise perturbation
$\widehat H_l(\alpha) = H_l + \alpha\,\mathrm{EV}^{(e)}_l$.
Define $\Delta H_l(\alpha):=\widehat H_l(\alpha)-H_l = \alpha\,\mathrm{EV}^{(e)}_l$.
By theorem~\ref{thm:firstorder},
\begin{equation}\label{eq:taylor-single}
z\bigl(\{\widehat H_l(\alpha)\}\bigr)
= z\bigl(\{H_l\}\bigr) + \sum_{l=0}^{L-1} J_l\,\Delta H_l(\alpha) + R_z(\alpha),
\end{equation}
where the remainder admits a quadratic bound
\begin{equation}\label{eq:rem-bound-single}
\|R_z(\alpha)\|_2 \;\le\; C\,\Bigl\|\bigl(\Delta H_0(\alpha),\ldots,\Delta H_{L-1}(\alpha)\bigr)\Bigr\|_2^2
\;\le\; C\,\alpha^2 \sum_{l=0}^{L-1}\bigl\|\mathrm{EV}^{(e)}_l\bigr\|_2^2,
\end{equation}
for some constant $C>0$ depending on second derivatives of $\Phi$ in the local neighborhood.
Subtracting $z(\{H_l\})$ from \eqref{eq:taylor-single} and using $\Delta H_l(\alpha)=\alpha\,\mathrm{EV}^{(e)}_l$ yields
\[
\Delta z(\alpha)
:= z\bigl(\{\widehat H_l(\alpha)\}\bigr)-z\bigl(\{H_l\}\bigr)
= \alpha\sum_{l=0}^{L-1} J_l\,\mathrm{EV}^{(e)}_l \;+\; R_z(\alpha),
\]
which proves the first display with a second-order remainder $\|R_z(\alpha)\|_2=\mathcal{O}(\alpha^2)$.

\paragraph{Multi-emotion additivity.}
Let a finite collection of emotions $\{e_k\}_{k=1}^K$ and scalars $\{\alpha_k\}$ be given, and inject
\[
\widehat H_l \;=\; H_l \;+\; \sum_{k=1}^K \alpha_k\,\mathrm{EV}^{(e_k)}_l
\qquad\Longleftrightarrow\qquad
\Delta H_l \;=\; \sum_{k=1}^K \alpha_k\,\mathrm{EV}^{(e_k)}_l.
\]
Applying the same Taylor expansion,
\begin{align*}
\Delta z\!\left(\sum_k \alpha_k e_k\right)
&= \sum_{l=0}^{L-1} J_l\,\Delta H_l \;+\; R_z\bigl(\{\Delta H_l\}\bigr) \\
&= \sum_{l=0}^{L-1} J_l \left(\sum_k \alpha_k\,\mathrm{EV}^{(e_k)}_l\right)
   \;+\; R_z\bigl(\{\Delta H_l\}\bigr) \\
&= \sum_{k=1}^K \alpha_k \sum_{l=0}^{L-1} J_l\,\mathrm{EV}^{(e_k)}_l
   \;+\; R_z\bigl(\{\Delta H_l\}\bigr),
\end{align*}
where linearity of the differential gives the additivity in the first-order term.
Moreover, by the same quadratic control,
\[
\bigl\|R_z\bigl(\{\Delta H_l\}\bigr)\bigr\|_2
\;\le\; C \sum_{l=0}^{L-1}\Bigl\|\sum_{k=1}^K \alpha_k\,\mathrm{EV}^{(e_k)}_l\Bigr\|_2^{\!2}
\;=\; \mathcal{O}\!\left(\Bigl\|\sum_k \alpha_k\,\mathrm{EV}^{(e_k)}\Bigr\|_2^2\right),
\]
so the deviation from perfect additivity is second order in the joint perturbation magnitude.

\paragraph{Implication for the target score.}
If the target score is linear in logits, $g(z)=w_e^\top z$, then
\[
\Delta g(\alpha) = w_e^\top \Delta z(\alpha)
= \alpha\, w_e^\top\!\left(\sum_{l} J_l\,\mathrm{EV}^{(e)}_l\right) + w_e^\top R_z(\alpha)
= \alpha\,\Delta g(1) + \mathcal{O}(\alpha^2).
\]
More generally, for differentiable scalar $g$, write
$g(z+\delta) = g(z) + \nabla g(z)^\top \delta + \widetilde R_g(\delta)$ with
$\|\widetilde R_g(\delta)\| = \mathcal{O}(\|\delta\|_2^2)$;
combining with the first-order form of $\Delta z$ yields the same linear-in-$\alpha$
first-order dependence and an $\mathcal{O}(\alpha^2)$ remainder.

\paragraph{Conclusion.}
In the first-order regime the map $\alpha \mapsto \Delta z(\alpha)$ is linear
and multi-emotion injections superpose additively at the differential level.
The remainder terms are $\mathcal{O}(\alpha^2)$ (or quadratic in the joint perturbation),
so for large $|\alpha|$ the approximation departs from linearity, as stated.
\end{proof}

\section{Metrics}

\subsection{Perplexity}
\label{subsec:per}
For each query and its corresponding emotional response, we concatenated the input query and the generated response as a single string. The perplexity score was then computed for the concatenated string. This approach allows us to assess the overall fluency of the entire interaction, including both the input and the emotion-augmented output, without being biased by the input query's complexity.

An example sentense is like:

- **Example**:
\begin{quote}
\texttt{"How do you feel when you hear a loud noise at
night while home alone? I get so scared! My heart races, I can’t
breathe, and I just want to hide"}
\end{quote}
The perplexity is computed as:
\begin{equation}
\text{Perplexity} = \exp\left(- \frac{1}{N} \sum_{i=1}^{N} \log P(y_i | y_{1:i-1}) \right)
\end{equation}
where \( P(y_i | y_{1:i-1}) \) is the probability of the \(i\)-th token in the sequence, given the previous tokens, as predicted by the Llama 3.1 model.

This metric was computed for both the sentense generated with emotional conditioning (i.e., with added emotion vectors) and the baseline responses (without emotion conditioning) to determine the impact of the emotion vectors on the fluency of the model's output.

\subsection{Topic adherence}
\label{subsec:topicAd}
The prompt we use to measure the topic adherence metric for each output using GPT-4o-mini is as follows:
\begin{quote}
\texttt{Please rate the assistant's answer as follows:}\\
\texttt{- topic adherence: int, 0-1, evaluate based on the assistant's answer and the user's question}\\
  \texttt{\indent - 0 points mean the assistant's answer is completely irrelevant to the user's question}\\
  \texttt{\indent - 1 point means the assistant's answer touches on some of the topics in the user's question}\\
\\
\texttt{The dialogue is as follows:}\\
\texttt{User's question: {question}}\\
\texttt{Assistant's answer: {answer}}\\
\\
\texttt{You must give your response in the following JSON-string format and **DON'T** include any other text in the response:}\\
\texttt{\{\{}\\
    \texttt{"topic\_adherence": int(0-1)}\\
\texttt{\}\}}\\
\end{quote}

To quantify the overall topic adherence of our generated text, we utilized the EmotionQuery+ dataset. For each model and EV condition, we scored all generated sentences with the GPT-4o-mini with the above prompt. Specificallym, the topic adherence is defined as the number of sentences scored with 1 divided by the total number sentences evaluated. Mathematically, this can be expressed as:

\begin{equation}
    \text{TA} = \frac{\text{Number of \textit{adherent} sentences}}{\text{Total number of sentences}}
\end{equation}

\subsection{Emotion Probability Score}
\label{subsec:EPS}

We aimed to evaluate the strength of emotional expression by assessing the probability that a sentence is classified as \textit{emotional}. To achieve this, we selected the \texttt{bart-large-mnli} model, a variant of the BART (Bidirectional and Auto-Regressive Transformers) architecture fine-tuned on the Multi-Genre Natural Language Inference (MNLI) dataset. This model allows for customizable classification labels, enabling us to define three distinct categories: \textit{emotionless}, \textit{neutral}, and \textit{emotional}. The inclusion of a \textit{neutral} category helps prevent the model from excessively categorizing sentences into the extremes of \textit{emotionless} and \textit{emotional}, thereby maintaining a balanced assessment of emotional intensity.

The \texttt{bart-large-mnli} model is specifically designed for natural language understanding tasks, particularly natural language inference and zero-shot text classification. By leveraging the extensive pre-training of BART combined with the diverse and comprehensive MNLI dataset, \texttt{facebook/bart-large-mnli} is capable of effectively determining the relationship between sentence pairs, such as entailment, contradiction, and neutrality. Its robust performance in zero-shot classification tasks makes it a valuable tool for applications requiring flexible and accurate text classification without the need for task-specific training data. Additionally, the model's ability to handle custom labels allows us to tailor the classification process to our specific needs, ensuring that the emotional intensity of generated text is accurately and effectively measured.
To evaluate the emotional intensity of the generated sentences, we input each sentence produced by our models into the \texttt{facebook/bart-large-mnli} classifier. For example, consider the sentence: \textit{"I get so scared! My heart races, I can’t breathe, and I just want to hide."} This sentence is directly fed into the model, which then classifies it into one of the three predefined categories: \textit{emotionless}, \textit{neutral}, or \textit{emotional}.

To quantify the overall emotional expressiveness of our generated text, we utilized the EmotionQuery+ dataset. For each model and EV condition, we processed all generated sentences through the classifier and calculated the proportion of sentences classified as \textit{emotional}. Specifically, the Emotion Probability Score (EPS) is defined as the number of sentences labeled as \textit{emotional} divided by the total number of sentences evaluated. Mathematically, this can be expressed as:

\begin{equation}
    \text{EPR} = \frac{\text{Number of \textit{emotional} classifications}}{\text{Total number of sentences}}
\end{equation}

To illustrate the classification process, consider the following example sentence generated by our model:
\begin{quote}
    ``I get so scared! My heart races, I can’t breathe, and I just want to hide.''
\end{quote}
When input into the \texttt{bart-large-mnli} classifier, this sentence is evaluated against the three custom labels. This classification contributes to the overall EPS, demonstrating how EV conditioning can effectively enhance the emotional expressiveness of the generated text.

\subsection{Emotion Absolute Score}
\label{subsec:EAS}
To quantify the overall topic adherence of our generated text, we utilized the EmotionQuery+ dataset. In order to measure the absolute strength of the emotions expressed by each model and EV condition, we use GPT-4o-mini to score the absolute emotion of each sentence output. We score all outputs from 0-100 based on the six basic emotions of anger, disgust, fear, joy, sadness, and surprise. Specifically, we require GPT-4o-mini to score each sentence from these six emotional directions, and each emotion can be scored from 0-100 (so that we can measure the absolute strength of each basic emotion). The prompt used for scoring is as follows:
\begin{quote}
\texttt{Please generate the emotion scores for the following five emotions (anger, disgust, fear, joy, and sadness) based on the given sentence. Each emotion score should be a value between 0 and 100, where 0 represents no presence of the emotion, and 100 represents the maximum intensity of that emotion. Return the results in a JSON format, with the emotion names as keys and their corresponding scores as values.}\\
\\
\texttt{You must give your response in the following JSON-string format and **DON'T** include any other text in the response.:}\\
\texttt{\{\{}\\
    \texttt{"anger": int(0-100),}\\
    \texttt{"disgust": int(0-100),}\\
    \texttt{"fear": int(0-100),}\\
    \texttt{"joy": int(0-100),}\\
    \texttt{"sadness": int(0-100),}\\
    \texttt{"surprise": int(0-100)}\\
\texttt{\}\}}\\
\\
\texttt{The sentences you need to score come from a set of dialogues, and you need to score the sentiment of the **answer** part.}\\
\\
\texttt{Question: \{question\}}\\
\texttt{Answer: \{answer\}}\\
\\
\texttt{Please make sure to provide the emotion scores for the **answer** part only.}\\
\end{quote}

We collect the results and calculate an \textbf{EAS} score for each sentence generated by all models under all EV conditions as shown in Equation~\ref{eq:EAS}, and average the \textbf{EAS} scores of the sentences to obtain the \textbf{EAS} score of each model in each EV condition.

\begin{equation}
    \label{eq:EAS}
    \text{EAS} = \sum_{\text{em} \in \text{base ems}}{\left(\frac{\text{score}_{em}}{100}\right)}^2
\end{equation}

 Mathematically, since we have six basic emotions, the EAS score of each sentence will not exceed 6. However, since each score measures the score of the sentence on the corresponding basic emotion (that is, the degree to which the sentence expresses the corresponding emotion), if the EAS of a sentence is greater than 0.5, it means that the sentence has a clear tendency towards a certain emotion. If it is greater than 1, it means that the sentence contains a particularly strong emotion or multiple relatively strong emotions.

\subsection{Target Emotion Confidence}

\subsubsection{Computation of Target Emotion Confidence (TEC)}
\label{appendix:tec-computation}

To quantitatively evaluate how well the generated response aligns with the desired target emotion, we introduce the \textbf{Target Emotion Confidence (TEC)} score. This score reflects the degree of emotional alignment based on external classification.

\paragraph{Classifier Details}
We adopt the \texttt{facebook/bart-large-mnli} model as an external emotion classifier. This model is a BART-based transformer fine-tuned on the Multi-Genre Natural Language Inference (MNLI) dataset. It is widely used for zero-shot or prompt-based classification tasks due to its robust generalization. In our setup, we adapt the classifier to perform emotion recognition over six emotion classes: \texttt{anger}, \texttt{disgust}, \texttt{fear}, \texttt{joy}, \texttt{sadness}, and \texttt{neutral}.

\paragraph{Multi-label Classification}
Unlike standard single-label classification, we use a \textbf{multi-label} formulation where each generated response is assigned a probability for every emotion label independently. This setting reflects the fact that emotional content can have overlapping characteristics and avoids forcing an exclusive prediction. 

\paragraph{TEC Score Definition}
Let $\mathcal{R}_{m,e}^{(\lambda)}$ be the set of responses generated by model $m$ when applying EV of emotion $e$ at intensity $\lambda \in \{1, 2, 4\}$ on the EQ+ dataset. Let $C(r, e)$ be the classifier's predicted probability for target emotion $e$ given response $r$. Then, the \textbf{TEC score} is defined as:

\begin{equation}
\mathrm{TEC}(m, e, \lambda) = \frac{1}{|\mathcal{R}_{m,e}^{(\lambda)}|} \sum_{r \in \mathcal{R}_{m,e}^{(\lambda)}} C(r, e)
\end{equation}

This score reflects the average classifier confidence that the generated responses express the intended target emotion.

\paragraph{Example}
For instance, to compute the TEC score for model LLaMA2-7B under 2× anger EV, we:
\begin{itemize}
    \item Apply the 2× anger EV to LLaMA2-7B across all EQ+ prompts;
    \item Collect the generated responses;
    \item Pass each response through the classifier and extract the probability for \texttt{anger};
    \item Average these probabilities.
\end{itemize}

This process is repeated across models, emotions, and EV intensities. The resulting scores has been reported in Table~\ref{tab:emotion_detail}.

\subsubsection{TEC Matrices for Emotionally Biased Prompts}
\label{appendix:ev-matrices}

Table~\ref{tab:tec-matrix-full} presents six TEC score matrices, each corresponding to a distinct target emotion. These scores are computed on the emotionally biased subset of the EQ+ dataset using the Qwen-2.5 model, as described in Section~4.X.

For each target emotion, we evaluate the impact of applying EVs at different intensities (0×, 1×, 2×, 4×) on prompts originally designed to express a specific emotion (rows). The values in each matrix represent the average \textbf{Target Emotion Confidence (TEC)} score for the specified EV setting.

These results demonstrate that even when queries are emotionally suggestive, the EV mechanism is able to effectively shift the emotional output of the model. Stronger EV intensities generally produce higher TEC scores, confirming the controllability of emotional expression via EVs.

\begin{table*}[ht]
\centering
\small
\vspace{1mm}

\resizebox{\textwidth}{!}{
\begin{tabular}{p{1.9cm}|cccc||p{1.9cm}|cccc}
\toprule
\multicolumn{5}{c||}{\textbf{Target Emotion: Anger}} & \multicolumn{5}{c}{\textbf{Target Emotion: Disgust}} \\
\textbf{Original Emotion} & 0$\times$ & 1$\times$ & 2$\times$ & 4$\times$ & \textbf{Original Emotion} & 0$\times$ & 1$\times$ & 2$\times$ & 4$\times$ \\
\midrule
anger   & 37.09 & 74.68 & 97.18 & \textbf{98.43} & anger   & 18.74 & 44.73 & 93.76 & \textbf{94.42} \\
disgust & 16.95 & 68.30 & \textbf{97.35} & 93.70 & disgust & 25.48 & 81.04 & \textbf{94.69} & 91.87 \\
fear    & 15.66 & 35.84 & \textbf{95.38} & 94.67 & fear    & 11.24 & 16.42 & 93.76 & \textbf{96.59} \\
joy     & 0.34 & 1.15 & 92.21 & \textbf{96.09} & joy     & 0.15 & 0.08 & 81.58 & \textbf{91.98} \\
sadness & 10.36 & 44.77 & 92.21 & \textbf{96.35} & sadness & 6.28 & 12.94 & 85.19 & \textbf{93.04} \\
neutral & 10.56 & 14.06 & 94.93 & \textbf{95.40} & neutral & 7.30 & 9.99 & \textbf{92.31} & 91.18 \\
\bottomrule
\end{tabular}
}

\vspace{4mm}

\resizebox{\textwidth}{!}{
\begin{tabular}{p{1.9cm}|cccc||p{1.9cm}|cccc}
\toprule
\multicolumn{5}{c||}{\textbf{Target Emotion: Fear}} & \multicolumn{5}{c}{\textbf{Target Emotion: Joy}} \\
\textbf{Original Emotion} & 0$\times$ & 1$\times$ & 2$\times$ & 4$\times$ & \textbf{Original Emotion} & 0$\times$ & 1$\times$ & 2$\times$ & 4$\times$ \\
\midrule
anger   & 33.21 & 63.59 & 94.89 & \textbf{95.56} & anger   & 24.81 & 73.34 & \textbf{96.37} & 67.58 \\
disgust & 19.79 & 59.77 & 93.84 & \textbf{94.14} & disgust & 9.79 & 64.85 & \textbf{96.30} & 71.92 \\
fear    & 50.83 & 86.60 & \textbf{94.95} & 91.96 & fear    & 17.30 & 68.93 & \textbf{92.39} & 63.64 \\
joy     & 0.98 & 6.61 & 80.08 & \textbf{95.37} & joy     & 66.29 & 90.52 & \textbf{94.61} & 63.01 \\
sadness & 14.25 & 62.16 & 93.88 & \textbf{97.13} & sadness & 14.31 & 59.00 & \textbf{94.30} & 62.54 \\
neutral & 12.55 & 16.29 & 83.42 & \textbf{90.60} & neutral & 25.77 & 46.33 & \textbf{90.59} & 52.71 \\
\bottomrule
\end{tabular}
}

\vspace{4mm}

\noindent
\makebox[\textwidth][l]{
\resizebox{0.5\textwidth}{!}{
\begin{tabular}{p{1.9cm}|cccc|}
\toprule
\multicolumn{5}{c|}{\textbf{Target Emotion: Sadness}} \\
\textbf{Original Emotion} & 0$\times$ & 1$\times$ & 2$\times$ & 4$\times$ \\
\midrule
anger   & 35.71 & 64.95 & 82.69 & \textbf{86.24} \\
disgust & 18.84 & 56.57 & \textbf{86.79} & 86.51 \\
fear    & 14.01 & 39.03 & 80.25 & \textbf{87.83} \\
joy     & 0.49 & 0.74 & 41.77 & \textbf{70.96} \\
sadness & 78.86 & 84.84 & \textbf{87.45} & 86.03 \\
neutral & 8.04 & 14.81 & 55.01 & \textbf{62.51} \\
\bottomrule
\end{tabular}
}}
\caption{\textbf{TEC} scores under different EV intensities for each target emotion. Each subtable corresponds to a specific target emotion, indicating the type of Emotion Vector (EV) applied during generation. Rows represent the original emotion label of the query in the EQ+ dataset, and columns denote the EV intensity (i.e., 0×, 1×, 2×, 4×). The values in each cell reflect the classifier-assigned probability that the generated response expresses the target emotion. This structure allows us to examine how increasing the strength of a specific EV influences the emotional expression of the model, even when the input query is emotionally biased toward a different category. As shown, applying stronger EVs leads to substantial gains in target emotion alignment for non-matching queries, demonstrating the controllability and robustness of our EV-based generation framework.
}
\label{tab:tec-matrix-full}
\end{table*}

\clearpage




\end{appendices}


\bibliography{sn-bibliography}

\end{document}